\DeclareMathOperator*{\argmin}{arg\,min}
\definecolor{mygreen}{rgb}{0,0.6,0}
\footnotesize\color{mygreen},
\newcommand{\ouralgo}{\textsc{SymSGD}\xspace}
\newcommand{\ouralgosec}{SymSGD\xspace}
\newcommand{\hogwild}{\textsc{Hogwild!}\xspace}
\newcommand{\allreduce}{\textsc{AllReduce}\xspace}
\newcommand{\vowpal}{Vowpal Wabbit\xspace}
\newcommand{\realR}{\mathbb{R}}
\newcommand{\covm}[1]{\mathbb{C}(#1)}
\newcommand{\norm}[1]{\left\lVert#1\right\rVert_2}
\newtheorem{theorem}{Theorem}[section]
\newtheorem{lemma}[theorem]{Lemma}
\newcommand{\para}[1]{{\bf #1}}
\newcommand{\comment}[1]{}
\icmltitlerunning{Parallel Stochastic Gradient Descent with Sound Combiners}
\begin{document} 

\twocolumn[
\icmltitle{Parallel Stochastic Gradient Descent with Sound Combiners}



\icmlsetsymbol{equal}{*}

\begin{icmlauthorlist}
\icmlauthor{Saeed Maleki}{microsoft}
\icmlauthor{Madanlal Musuvathi}{microsoft}
\icmlauthor{Todd Mytkowicz}{microsoft}
\end{icmlauthorlist}

\icmlaffiliation{microsoft}{Microsoft Research}
\icmlcorrespondingauthor{\{saemal,madanm,toddm\}@microsoft.com}{}

\icmlkeywords{boring formatting information, machine learning, ICML}

\vskip 0.3in
]



\printAffiliationsAndNotice{}  

\newcommand{\Expect}{{\rm I\kern-.3em E}}
\newcommand{\Var}{{\rm Var}}

\newcommand{\MR}{MR-\ouralgo\xspace}
\newcommand{\AS}{Async-\ouralgo\xspace}
\newcommand{\HW}{HogWild!\xspace}

\begin{abstract} 
  Stochastic gradient descent (SGD) is a well-known method for
  regression and classification tasks. However, it is an inherently
  sequential algorithm---at each step, the processing of the current
  example depends on the parameters learned from the previous
  examples. Prior approaches to parallelizing linear learners using
  SGD, such as \hogwild and \allreduce, do not honor these
  dependencies across threads and thus can potentially suffer poor
  convergence rates and/or poor scalability. This paper proposes
  \ouralgo, a parallel SGD algorithm that, to a first-order
  approximation, retains the sequential semantics of SGD.  Each thread
  learns a local model in addition to a \emph{model combiner}, which
  allows local models to be combined to produce the same result as
  what a sequential SGD would have produced.
This paper evaluates \ouralgo's accuracy and performance on $6$
datasets on a shared-memory machine shows up-to $11\times$ speedup over
our heavily optimized sequential baseline on $16$ cores and $2.2\times$, on average, faster than \hogwild.
\end{abstract} 


\section{Introduction}
\label{sec:intro}
Stochastic Gradient Descent~(SGD) is an effective method for many
machine learning problems.  It is a simple algorithm with few
hyper-parameters and its convergence rates are well understood both
theoretically and empirically. However, its performance scalability is
severely limited by its inherently sequential computation. SGD
iteratively processes its input dataset where the computation at each
iteration depends on the model parameters learned from the previous
iteration.

Current approaches for parallelizing SGD learn local models per thread
and combine these models in ways that do not honor this inter-step
dependence. For instance, threads in \hogwild~\cite{hogwild} racily
update a shared global model without holding any locks. In
parameter-server~\cite{parameterServerOSDI}, each thread (or machine)
periodically sends its model deltas to a server that applies them to a
global model, even though the deltas were computed on a stale model from
a few updates ago. 

While these algorithms are guaranteed to eventually
converge, they need to carefully manage the communication-staleness
trade-off. On the one hand, \hogwild communicates after processing
every input example to achieve bounded staleness but the resulting
communication cost limits scalability even in a single machine on
\emph{sparse} datasets --- as we show in our experiments, even sparse
datasets have frequent features that produce significant cache
traffic. On the other hand, techniques such as
\allreduce~\cite{allreduce} the staleness causes a drop in accuracy
on the same number of examples with respect to a sequential baseline. 


\comment{
The key challenge in these algorithms is to manage the trade-off
between the cost of communication and keeping the local models up to
date with the global model. Our experiments show that techniques like
\hogwild do not scale, particularly across multiple processor sockets
due to the frequency of communication, while other techniques fall
short on accuracy with respect to a sequential baseline on the same
number of examples. 
}

This paper presents \ouralgo, a parallel SGD algorithm that allows the
threads to communicate less frequently but achieve a high-fidelity
approximation to what the threads would have produced had they run
sequentially. The key idea is for each thread to generate a sound
\emph{model combiner} that precisely captures the first-order effects
of a SGD computation starting from an \emph{arbitrary}
model. Periodically, the threads update a global model with their
local model while using the model combiner to account for changes in
the global model that occurred in the interim. 

While the algorithm can be generalized to different machine learning
problems and on different parallel settings, such as distributed
clusters and GPUs, we focus our evaluation on linear learners on
mulitcore machines. This is primarily motivated by the fact that this
setting forms the core of machine learning today.  At Microsoft, 
developers trained over 1
million models per month in 2016 on single-node
installations. Likewise, Databrick's 2015 survey showed almost 50\% of
Spark installations are single-node~\cite{databricks}. As machines
with terabytes of memory become a commonplace (As of February
2017, one can rent an X1 instance from AWS with
2 TB memory and 128 cores for less than \$4 per
hour~\cite{aws-x1-instance}), machine learning tasks on large datasets
can be done efficiently on single machines without paying the inherent
cost of distribution~\cite{frankmcsherry}. 

Our evaluation shows that \ouralgo is fast, scales well on multiple
cores, and achieves the same accuracy as sequential SGD. When compared
to our optimized sequential baseline, \ouralgo achieves a speedup of
$8.3X\times$ to $11\times$ on 16 cores. This represents a $2.25\times$
speedup over \hogwild, on average.



\section{Parallel \ouralgosec Algorithm}
\newcommand{\dotp}[2]{#1\cdot #2}
\newcommand{\deriv}[2]{\frac{\partial #1}{\partial #2}}

Given a set of $N$ input examples $z_i = (x_i,y_i)$, where $x_i$ is a
vector of $f$ feature values and $y_i$ is the label to learn, let
$C(w) = \frac{1}{N} \sum_i C_{z_i}(w,x_i,y_i)$ be the convex cost
function to minimize. That is, we seek to find $$w^*=\argmin_{w\in
\realR^f}\sum_{i=0}^n C_{z_i}(w, x_i, y_i)$$ The cost function can
optionally include a regularization term. We define $G \triangleq
\deriv{C}{w}$ and $G_z \triangleq
\deriv{C_z}{w}$ for the gradients, and $H \triangleq \deriv{G}{w}$ and $H_z \triangleq
\deriv{G_z}{w}$ for the Hessian of the cost function. 

\comment{
Consider a training dataset $(X_{n\times f},y_{n\times 1})$, where $f$
is the number of features, $n$ is the number of examples in the
dataset, the $i^{th}$ row of matrix $X$, $x_i$, represents the
features of the $i^{th}$ example, and $y_i$ is the dependent value (or
label) of that example. 

A linear model seeks to find
a $$w^*=\argmin_{w\in \realR^f}\sum_{i=0}^n Q(x_i\cdot w, y_i)$$ that
minimizes an error function $Q$. 

For linear learners, the gradient and the Hessian of the cost function
have a special form. Let $G_z(v,y) \triangleq \deriv{C_z(v,y)}{v}$ and
$H_z(v,y) \triangleq \deriv{G_z(v,y)}{v}$ be the partial derivatives
respectively of $C_z$ and $G_z$ with respect to $v =
\dotp{x}{w}$. Thus we have $G\triangleq\deriv{C}{w} = \frac{1}{N}\sum_{i} G_{z_i}(\dotp{x_i}{w},
y_i)\cdot x_i$ and $H\triangleq\deriv{G}{w} = \frac{1}{N}\sum_{i} H_{z_i}(\dotp{x_i}{w},
y_i)\cdot x_i \cdot x_i^T$. 
}

At each step $t$, SGD picks 
$z_r = (x_r, y_r)$ uniformly randomly from the input dataset 
and updates the current model
$w_t$ along the gradient $G_{z_r}$:
\begin{equation}
\label{eq:seqsgd}
w_{t+1}=w_t - \alpha_t G_{z_r}({w_{t}, x_r, y_r})
\end{equation}
Here, $\alpha_t$ is the {\em learning rate} that determines the
magnitude of the update along the gradient. As this equation shows,
$w_{t+1}$ is dependent on $w_t$ and this dependence 
makes
parallelization of SGD across iterations difficult.

\label{sec:algo}
\begin{figure}
  \centering
  \includegraphics[width=\columnwidth]{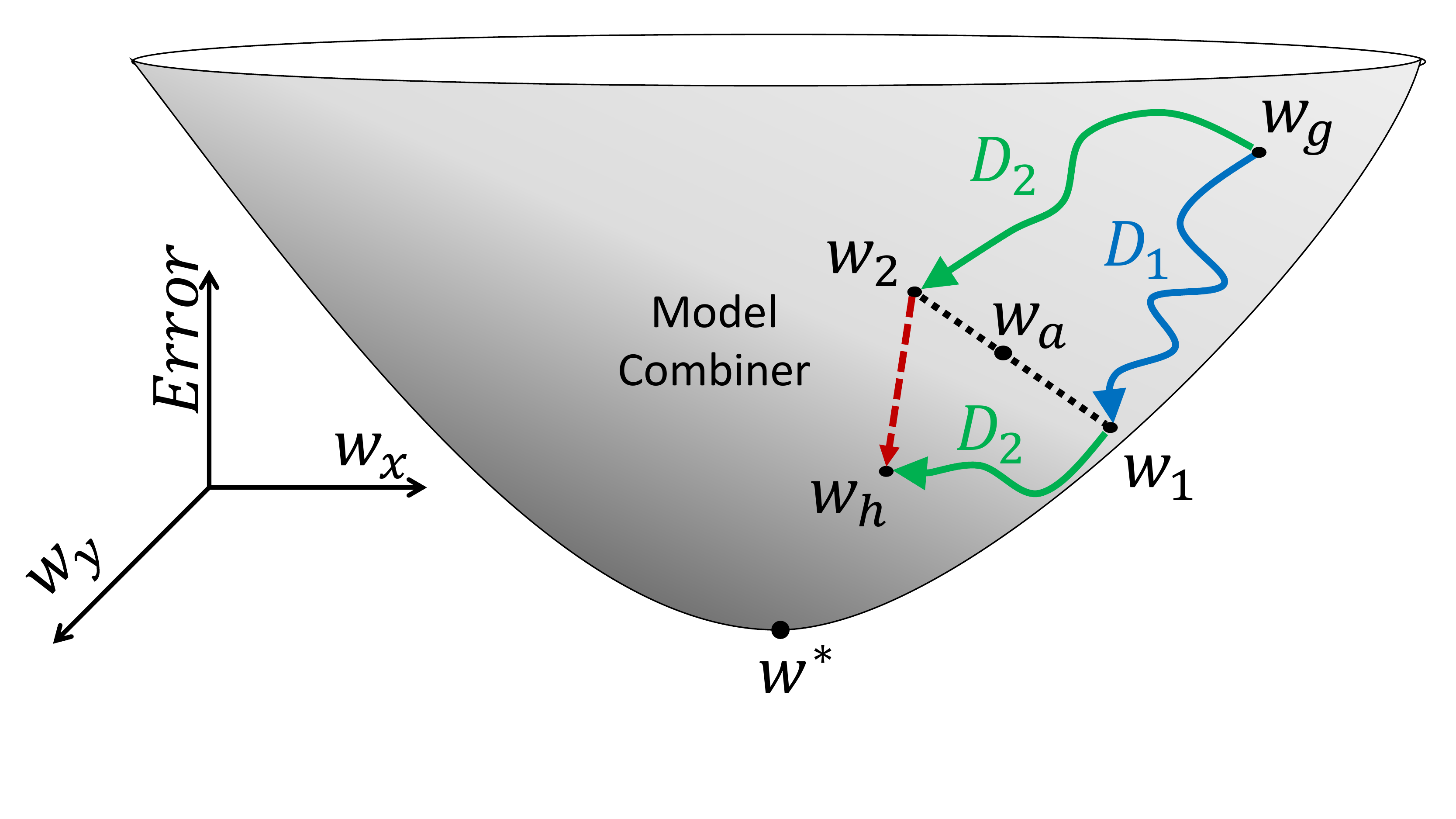}
  \caption{Convex error function for a two-dimensional feature space.}
  \label{fig:sgdcurve}
\end{figure}

Figure~\ref{fig:sgdcurve} demonstrates this difficulty. Say, a
processor performs SGD on a sequence of examples $D_1$ from a global
model $w_g$ to reach $w_1$. When processing a subsequent sequence
$D_2$, a sequential SGD algorithm would have started from $w_1$ to
reach $w_h$. Now, we desire to process $D_1$ and $D_2$ in parallel,
but the computation on $D_2$ cannot start on $w_1$, which is known 
only after the computation on $D_1$ has finished.

State of the art parallelization techniques such as \hogwild and
\allreduce approach this problem by processing $D_1$ and $D_2$
starting from the same model $w_g$ and respectively reaching their
local models $w_1$ and $w_2$. Then, they combine their local models
into a global model, but do so in an ad-hoc manner. For instance,
\allreduce computes a weighted average of $w_1$ and $w_2$, where the
per-feature weights are chosen so as to prefer the processor that has
larger update for that feature. This weighted average is depicted
pictorially as $w_a$ in Figure~\ref{fig:sgdcurve}. But doing so does
not necessarily reach $w_h$, the model that a sequential SGD would
have produced. \hogwild attempts to get around this staleness
problem by communicating frequently after every input example~(that
is, the size of $D_1$ and $D_2$ is $1$). But the resulting
communication cost hurts scalability particularly across multiple
sockets. This is true even for sparse datasets due to the presence of
frequently-occurring features.



\subsection{Symbolic SGD}
\label{sec:M}


The goal of this paper is to \emph{soundly} combine local models with
the hope of producing the same model as what a sequential SGD would
have produced. In Figure~\ref{fig:sgdcurve}, we seek a method to
combine $w_1$ and $w_2$ into the global model $w_h$. This requires
``adjusting'' the computation of $D_2$ for the staleness $w_1-w_g$ in
the starting model.

To do so, the second processor performs its computation from 
$w_g + \Delta w$, where $\Delta w$ is an \emph{unknown} symbolic
vector. This allows the second processor to both compute a local model
(resulting from the concrete part) and a model combiner (resulting
from the symbolic part) that accounts for changes in the initial
state. Once both processors are done learning, second processor finds
$w_h$ by setting $\Delta w$ to $w_1-w_g$ where $w_1$ is computed by
the first processor. This parallelization approach of SGD can be
extended to multiple processors where all processor produce a local
model and a combiner (except for the first processor) and the local
models are combined sequentially using the combiners.

\subsection{Model Combiners}
\label{sec:model:combiners}
Let $S_D(w)$ represent the SGD computation of dataset $D$ starting
from $w$. For example, $w_1 = S_{D_1}(w_g)$ in
Figure~\ref{fig:sgdcurve}. To generate the model combiner, we need to
reason about $S_D(w + \Delta w)$. 
Assuming that $S_D$ is differentiable, we have the following Taylor
series expansion: 
\begin{align}
S_{D} (w + \Delta w) = \underbrace{S_{D}(w)}_\text{local
  model} + \underbrace{S'_{D}(w)}_\text{model combiner}\cdot
\Delta w + O(|\Delta w|_2)
\label{tayloreqn}
\end{align}
We define  $M_D  \triangleq S'_D = \deriv{S}{w}$ as the model combiner. 
In the equation above, the model combiner captures the first-order effect of
how a $\Delta w$ change in $w_g$ will affect the SGD computation. For
instance, by using $\Delta w = w_1 - w_g$ in this equation, one can
combine the local models in Figure~\ref{fig:sgdcurve} to generate
$w_h$.

When $\Delta w$ is sufficiently small, one can neglect the second order
term and use the model combiner to combine local models with
sufficient fidelity. Section~\ref{sec:convproof} in the Appendix shows
that convergence is guaranteed when neglecting the higher order
terms under certain general assumptions of the cost function, provided
$\norm{\Delta w}$ is bounded.  

The following lemma shows how
to generate a model combiner. 

\begin{lemma}
\label{lemma:NM}
Let $D = (z_1, z_2, \ldots, z_n)$ be a sequence of input examples and 
$D_i$ represent the subsequence $(z_1, \ldots, z_i)$. The
model combiner is given by
\begin{equation}
\label{eqn:mc}
M_D(w) = \prod_{i=n}^1 (I-\alpha_i \cdot H_{z_i}(S_{D_{i-1}}(w), x_i, y_i)
\end{equation}
with $S_{D_0}(w) = w$
\end{lemma}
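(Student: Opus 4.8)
The plan is to exploit the recursive structure of the SGD computation and prove the formula by induction on the length $n$ of the sequence $D$. The starting observation is that a single SGD step is the map $T_i(v) = v - \alpha_i G_{z_i}(v, x_i, y_i)$, so that $S_{D_i}(w) = T_i(S_{D_{i-1}}(w))$ and the full computation is the composition $S_D = T_n \circ T_{n-1} \circ \cdots \circ T_1$ with base case $S_{D_0}(w) = w$. Since $M_D = \deriv{S_D}{w}$ is the Jacobian of this composition, the formula should fall out of the chain rule, and the claimed descending product $\prod_{i=n}^1$ is exactly the ordered product of the per-step Jacobians.

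For the inductive argument, first I would differentiate the single-step recurrence $S_{D_i}(w) = S_{D_{i-1}}(w) - \alpha_i G_{z_i}(S_{D_{i-1}}(w), x_i, y_i)$ with respect to $w$. Applying the chain rule to the gradient term and using the definition $H_z = \deriv{G_z}{w}$ of the Hessian yields the recurrence
$$\deriv{S_{D_i}}{w} = \left(I - \alpha_i H_{z_i}(S_{D_{i-1}}(w), x_i, y_i)\right) \cdot \deriv{S_{D_{i-1}}}{w}.$$
The base case is immediate: $S_{D_0}(w) = w$ gives $\deriv{S_{D_0}}{w} = I$, matching the empty product. Unrolling this recurrence from $i = n$ down to $i = 1$ (equivalently, inducting on $i$) then produces exactly Equation~\eqref{eqn:mc}.

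The main point requiring care is the ordering of the matrix factors. Because the Jacobians $I - \alpha_i H_{z_i}$ are matrices that do not commute in general, the recurrence shows that each newly processed example contributes its factor on the \emph{left}, so the product must run with index $n$ outermost and index $1$ innermost --- precisely the convention $\prod_{i=n}^1$ in the statement. A secondary technical point is that each factor must be evaluated at the intermediate iterate $S_{D_{i-1}}(w)$ rather than at $w$, which is forced by the chain rule passing through the composition; the induction makes this bookkeeping explicit. Beyond these ordering and evaluation-point subtleties, the argument is a routine application of the chain rule, and it requires only that each $S_{D_i}$ be differentiable --- the standing assumption on $S_D$ already invoked to justify the Taylor expansion in Equation~\eqref{tayloreqn}.
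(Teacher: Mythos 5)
Your proposal is correct and follows essentially the same route as the paper: the paper's proof also writes $S_D$ as the composition $S_{z_n}\circ\cdots\circ S_{z_1}$ and invokes the chain rule on the single-step update of Equation~\eqref{eq:seqsgd}. Your induction merely spells out the ordering and evaluation-point bookkeeping that the paper's one-line argument leaves implicit.
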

\begin{proof}
We have 
$$S_D(w) = S_{z_n}(S_{z_{n-1}}(\ldots(S_{z_1}(w))))$$
The proof follows from Equation~\ref{eq:seqsgd} and the chain rule. 
\end{proof}

\subsection{The Parallel SGD Algorithm}
\label{sec:parallel:algo}
Model combiners provide a lot of flexibility to design parallel SGD
algorithms. Section~\ref{sec:eval} explores both a map-reduce version
and an asynchronous version. We describe the former here for
completeness. 

In the {map} phase, each processor $i \in [1,N]$ starts
from the same global model $w_g$ and computes its local model
$S_{D_i}(w_g)$ and the model combiner $M_{D_i}(w_g)$ in parallel. A
subsequent \emph{reduction} phase combines the local models by
adjusting the input of processor $i$ by $w_{i-1} - w_g$. 

\begin{equation} 
\label{eq:combine}
w_i = S_{D_i}(w_g) + M_{D_i}(w_g) \cdot (w_{i-1} - w_g) 
\end{equation}

\comment{
Lemma~\ref{lemma:M} ensures that this combination of local models will
produce the same output as what these processors would have generated
had they run sequentially. 

Of course, this is just one possible implementation of the parallel
algorithm. Another option would be to perform the model combination
asynchronously where each processor independently decides to update
the global model with its current local model and model
combiner. 
}
\subsection{Examples}
Many interesting machine learning algorithms, such as linear
regression, linear regression with L2 regularization, and polynomial
regression have a linear update to the model parameters (but not
necessarily linear on the input example). In such cases, the higher
order terms in Equation~\ref{tayloreqn} vanish. For such learners,
model combiners generate \emph{exactly} the same model as a sequential
SGD. 

\comment{If the dependence on
$\Delta w$ is linear during an SGD update, which is indeed the case
for this class of algorithms, then the symbolic dependence on
$\Delta w$ on the final output is \emph{exactly}
$$S_{D_k} (w_s + \Delta w) = S_{D_k}(w_s) + S'_{D_k}(w_s)\cdot \Delta
w$$ because the higher order terms ($O(|\Delta w|_2)$) are 0.
}

Specifically, considering standard linear regression with square loss,
the combiner matrix is given by 
$$M_{D}(w) =  \prod_{i=n}^1 (I-\alpha_i \cdot x_i \cdot x_i^T)$$
when computing on $D = (x_1, y_1) \ldots (x_n, y_n)$. Since the model
combiner is independent of $w$, this can be computed once and reused
in subsequent phases provided the learning rates do not change. 

For logistic regression, which has the update rule
$$w_i = w_{i-1} - \alpha \cdot (\sigma(x_i \cdot w_{i-1}) - y_i)\cdot x_i$$ 
where $\sigma$ is the sigmoid function, the 
model combiner is given by
\begin{equation}
  \label{eq:general}
  M_{D}(w) = \prod_{i=n}^1(I- \alpha_i \cdot \sigma'(x_i \cdot w_{i-1})\cdot x_i\cdot x_i^T)
\end{equation}
where $w_{0} = w$.
The model combiner for logistic regression is the model combiner
generated for linear regression but with $\alpha$ scaled by
$\sigma'(x_i \cdot w_{i-1})$.

\begin{table*}[]
\centering
\caption{Model combiners for various linear learners}
\label{tab:model:combiners}
\begin{tabular}{@{}lll@{}}
\toprule
Algorithm & SGD Update for $z=(x,y)$ & Model Combiner $M_z(w,x,y,\alpha)$       \\ \midrule
OLS       & $w - \alpha (x\cdot w - y)$         & $I- \alpha \cdot x\cdot x^T$                           \\
Logistic  & $w - \alpha (\sigma(x\cdot w) - y)$ & $I- \alpha \sigma'(x \cdot w)\cdot x \cdot x^T$ \\
Perceptron& $w + \alpha (y\cdot x\cdot \delta_{y\cdot x\cdot w \leq 0} )$& $I$\\
SVM       & $w - \alpha (\lambda w - y\cdot x\cdot \delta_{x\cdot w >
1}) $& $(1 - \alpha \lambda) I$\\ 
Lasso     
& $ [max(0, u_i \triangleq w_i - \alpha(\lambda + s(i)(y - w \cdot x))x_i]_i $ 
& $ [\delta_{u_i > 0}(\delta_{i=j} - \alpha s(i) x_i x_j)]_{ij}$\\ 
\bottomrule
\end{tabular}
\caption{Model combiners for linear learners from
~\cite{bottou}. Here, $\lambda > 0$ is an additional
hyperparameter and $\delta_\phi$ is $1$ when $\phi$ is true else $0$. 
In Lasso, the model $w$ consists of positive $w_+$ and negative $w_-$
features with $s(i)$ denoting the sign of feature $i$. $[v_i]_i$
describes a vector with $v_i$ as the $i$th element and
$[m_{ij}]_{ij}$ represents a matrix with $m_{ij}$ as the
$(i,j)$th element.}
\end{table*}

Table~\ref{tab:model:combiners} provides the model combiners for a few
linear learners. When the SGD update function is not differentiable,
using the Taylor expansion in Equation~\ref{tayloreqn} can result in
errors at points of discontinuity. However, assuming bounded
gradients, these errors do not affect the convergence of
\ouralgo~(Section \ref{sec:convproof}).

\comment{Any implementation of this algorithm must compute both a local model
in addition to the model combiner matrix. In practice, an algorithm
drops the higher order terms.  The magnitude of such an approximation
is bounded by the norm of $\Delta w$.  Thus, frequent communication
implies a better approximation as $\Delta w$ grows with each new
example fed to the SGD algorithm.  As such, this algorithm lets a
practitioner balance parallel computation with communication.
}

\comment{
Real machine learning problems involve learning over tens of thousands
to billions of features. Computing and maintaining a model combiner
matrix for problems that large clearly is not feasible.  We solve this
problem through dimensionality reduction.  Johnson-Lindenstrauss~(JL)
lemma~\cite{jllemma} allows us to project a set of vectors from a
high-dimensional space to a random low-dimensional space while
preserving distances. We use this property to reduce the size of the
combiner matrix without losing the fidelity of the computation.
Section~\ref{sec:MA} discusses these details and Appendix:~\ref{}
demonstrates that under reasonably mild assumptions, our algorithm
converges.
}

\comment{
In general, any non-linear activation function, $f(x \cdot w, y)$, results in a model
combiner with the form
$$S'_{D_k}(w_s) = \prod_{i=b}^a (I-\alpha \cdot f'(x_i\cdot w_{i-1}, y_i)\cdot x_i^T \cdot x_i)$$
where $f'$ is the derivative of $f$ with respect to the model
parameters $w$.
}

\section{Dimensionality Reduction of a Model Combiner}
\label{sec:MA}

One key challenge in using model combiners as described above is that
they are large $f \times f$ matrices.  Machine learning problems
typically involve learning over tens of thousands to billions of
features. Thus, it is impossible to represent the model combiner
explicitly. This section describes mechanisms to address this
problem. 

The basic idea is to project the combiner matrix into a smaller
dimension while maintaining its fidelity. This projection is
inspired by the Johnson-Lindenstrauss~(JL) lemma~\cite{jllemma} and
follows the treatment of Achlioptas~\cite{verysparse}. While this
projection generates an unbiased estimate of the combiner, its
variance could potentially affect convergence. Our convergence proof
in Section~\ref{sec:convproof} show that with appropriate bounds on
this variance, convergence is guaranteed. 

\subsection{Random Projection}
We observe that the only use of a combiner matrix $M_D$ in \ouralgo is
to multiply it with a $\Delta w$. To avoid representing $M_D$
explicitly, we instead maintain $M_D \cdot A$ for a randomly generated
$f \times k$ matrix $A$ with $k \ll f$. Then we estimate $M_D \cdot
\Delta w$ with $M_D \cdot A \cdot A^T \cdot \Delta_w$. The following lemma describes
when this estimation is unbiased.

Let $[m_{ij}]_{ij}$ represents a matrix with $m_{ij}$ as the element
in the $i$th row and $j$th column. 
\begin{lemma}
  \label{lem:random}
Let $A = [a_{ij}]_{ij}$ be a random $f\times k$ matrix with 
$$a_{ij} = d_{ij}/\sqrt{k}$$
where $d_{ij}$ is independently sampled 
from a random distribution $D$ with $\Expect[D]=0$ and $\Var[D]=1$. Then 
$$\Expect[A\cdot A^T]=I_{f\times f}$$
\end{lemma}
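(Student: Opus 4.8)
The plan is to compute the expectation entrywise and exploit the independence of the entries of $A$. First I would fix indices $i,j \in \{1,\ldots,f\}$ and write the $(i,j)$ entry of the $f\times f$ matrix $A\cdot A^T$ as a sum over the $k$ columns of $A$, namely $(A\cdot A^T)_{ij} = \sum_{\ell=1}^k a_{i\ell}\,a_{j\ell} = \frac{1}{k}\sum_{\ell=1}^k d_{i\ell}\,d_{j\ell}$, using the definition $a_{ij} = d_{ij}/\sqrt{k}$. Applying linearity of expectation then reduces the whole claim to evaluating $\Expect[d_{i\ell}\,d_{j\ell}]$ for each term in this sum.

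Next I would split into the diagonal and off-diagonal cases. For $i=j$, the product is $d_{i\ell}^2$, and the second-moment identity $\Expect[d_{i\ell}^2] = \Var[D] + (\Expect[D])^2 = 1 + 0 = 1$ gives $\Expect[(A\cdot A^T)_{ii}] = \frac{1}{k}\sum_{\ell=1}^k 1 = 1$; this is precisely where the $1/\sqrt{k}$ scaling is chosen to normalize the diagonal. For $i\neq j$, the entries $d_{i\ell}$ and $d_{j\ell}$ are drawn independently, so $\Expect[d_{i\ell}\,d_{j\ell}] = \Expect[d_{i\ell}]\,\Expect[d_{j\ell}] = 0$, and hence $\Expect[(A\cdot A^T)_{ij}] = 0$. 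Combining the two cases shows every entry of $\Expect[A\cdot A^T]$ matches the corresponding entry of $I_{f\times f}$.

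There is no real obstacle here: the result is a direct second-moment computation, and the only points requiring care are (i) invoking mutual independence of all $d_{ij}$ to kill the off-diagonal cross terms, and (ii) correctly converting the hypotheses $\Expect[D]=0$ and $\Var[D]=1$ into $\Expect[d_{i\ell}^2]=1$ via the variance formula. I would state the independence assumption explicitly at the outset so that the factorization of the expectation in the off-diagonal case is justified, and then conclude $\Expect[A\cdot A^T] = I_{f\times f}$.
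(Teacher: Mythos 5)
Your proof is correct and follows essentially the same route as the paper's: both compute $\Expect[(A\cdot A^T)_{ij}]$ entrywise, using independence and zero mean to kill the off-diagonal terms and the unit variance (with the $1/\sqrt{k}$ normalization) to make each diagonal entry equal to $1$. Your write-up is in fact slightly more careful than the paper's, which states the same two cases but glosses over the reduction $\Expect[d_{i\ell}^2]=\Var[D]+(\Expect[D])^2$ that you make explicit.
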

\begin{proof}
If $B = [b_{ij}]_{ij} \triangleq A \cdot A^T$, we have 
$\Expect[b_{ij}] = \frac{1}{k} \sum_k \Expect[a_{ik}a_{jk}]$.
When $i\neq j$, $\Expect[b_{ij}] = 0$ as $a_{ik}$ and $a_{jk}$ are
independent random variables with mean $0$. $\Expect[b_{ii}] = 1$ as
the variance of $a_{ii}$ is $1$. 
\end{proof}


With this lemma, the model combination with Equation~\ref{eq:combine} becomes
\begin{equation}
  \label{eq:MA}
  w_i \approx S_{D_i}(w_{g}) + M_{D_i}(w_{g})\cdot A\cdot A^T(w_{i-1}-w_g) 
\end{equation}
This allows an efficient algorithm that only computes the projected
version of the combiner matrix while still producing the same answer
as the sequential algorithm in expectation.  This projection incurs a
space and time overhead of $O(z\times k)$ where $z$ is the number of
non-zeros in an example, $x_i$.  This overhead is acceptable for
small $k$ and in fact in our experiments in Section~\ref{sec:eval}, $k$
is between $7$ to $15$ across all benchmarks. Most of the overhead for
such a small $k$ is hidden by utilizing SIMD hardware within a
processor (SymSGD with one thread is only half as slow as the
sequential SGD as discussed in Section~\ref{sec:eval}). After learning
a local model and a projected model combiner in each processor,
\ouralgo combines the resulting local models using the combiners, but
additionally employs the optimizations discussed in
Section~\ref{sec:synciter}.

Note that a subset of the data, $D_k$, often contains a subset of total
number of features.  Our implementation takes advantage of this
property and allocates and initializes \texttt{A} for only these
\emph{observed} features.

\subsection{The Variance of Projection}
\label{sec:synciter}
The unbiased estimation above 
is useful only if the variance of the approximation is acceptably
small. The following lemma describes the variance of the random
projection described above.


The trace of a matrix $M$, $tr(M)$ is the sum of the diagonal
elements.  Let $\lambda_i(M)$ by the $i$th eigenvalue of $M$ and
$\sigma_i(M) = \sqrt{\lambda_i(M^T\cdot M)}$ the $i$th singular value
of $M$. 
Let $\sigma_{max}(M)$ be the maximum singular value of $M$.

\begin{lemma}
\label{lem:boundcov}
Let $v = M \cdot A \cdot A^T \cdot \Delta w$. Then the trace of the
covariance matrix $tr(\covm{v})$ is bounded by
\begin{align*}
tr(\covm{v}) &\geq \frac{\norm{\Delta w}^2}{k} \sum_i \sigma_i^2(M)\\
tr(\covm{v}) &\leq  \frac{\norm{\Delta w}^2}{k} (\sum_i
\sigma_i^2(M)+\sigma_{max}^2(M))\nonumber
\end{align*}
\end{lemma}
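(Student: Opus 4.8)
The plan is to reduce the statement to a trace computation and then bound a single quadratic form. Since $v = M\cdot A\cdot A^T\cdot\Delta w$ and, by Lemma~\ref{lem:random}, $\Expect[A\cdot A^T]=I$, the mean is $\Expect[v]=M\cdot\Delta w$. Writing the centered variable as $v-\Expect[v]=M\,(AA^T-I)\,\Delta w$ and using that $AA^T-I$ is symmetric, I would express the covariance as $\covm{v}=M\,\Sigma\,M^T$ where $\Sigma\triangleq\Expect[(AA^T-I)\,\Delta w\,\Delta w^T\,(AA^T-I)]$. By the cyclic property of the trace, $tr(\covm{v})=tr(\Sigma\,M^T M)$, and since the eigenvalues of $M^T M$ are exactly the $\sigma_i^2(M)$, this reframes everything in terms of the symmetric PSD matrix $M^T M$, whose trace is $\sum_i\sigma_i^2(M)$ and whose largest eigenvalue is $\sigma_{max}^2(M)$.

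The technical core is to evaluate $\Sigma$. I would write $AA^T=\frac1k\sum_{l=1}^k d^{(l)}(d^{(l)})^T$, where $d^{(l)}$ is the $l$th column of $\sqrt k\,A$ (independent across $l$, with iid mean-$0$ variance-$1$ entries). Expanding $(AA^T)\Delta w=\frac1k\sum_l c_l\,d^{(l)}$ with $c_l\triangleq d^{(l)}\cdot\Delta w$, the cross terms $l\neq l'$ factor through independence and contribute $\frac{k-1}{k}\Delta w\,\Delta w^T$, while the diagonal terms reduce to computing $\Expect[c^2\,d\,d^T]$ for a single generic column. This last expectation is the only place fourth moments enter: a pairing argument over which index combinations survive (each distinct entry must appear to an even power) gives $\Expect[c^2 dd^T]=\norm{\Delta w}^2 I+2\,\Delta w\,\Delta w^T+(\mu_4-3)\,\mathrm{diag}(\Delta w_i^2)$, where $\mu_4=\Expect[D^4]$ and $\mathrm{diag}(\Delta w_i^2)$ is the diagonal matrix of squared coordinates of $\Delta w$. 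Assembling the pieces and subtracting $\Delta w\,\Delta w^T$ yields $\Sigma=\frac1k\bigl(\norm{\Delta w}^2 I+\Delta w\,\Delta w^T+(\mu_4-3)\,\mathrm{diag}(\Delta w_i^2)\bigr)$; under the construction of Achlioptas followed here, whose entries $\{\pm\sqrt3,0\}$ satisfy $\mu_4=3$ (as do Gaussian entries), the diagonal correction vanishes and $\Sigma=\frac1k(\norm{\Delta w}^2 I+\Delta w\,\Delta w^T)$.

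With $\Sigma$ in hand the bounds are immediate. Substituting gives $tr(\covm{v})=\frac1k\bigl(\norm{\Delta w}^2\,tr(M^T M)+\Delta w^T M^T M\,\Delta w\bigr)=\frac1k\bigl(\norm{\Delta w}^2\sum_i\sigma_i^2(M)+\Delta w^T M^T M\,\Delta w\bigr)$. The residual term $\Delta w^T M^T M\,\Delta w$ is a quadratic form in the PSD matrix $M^TM$, hence lies between $0$ and $\lambda_{max}(M^TM)\norm{\Delta w}^2=\sigma_{max}^2(M)\norm{\Delta w}^2$. The lower endpoint yields $tr(\covm{v})\geq\frac{\norm{\Delta w}^2}{k}\sum_i\sigma_i^2(M)$ and the upper endpoint yields $tr(\covm{v})\leq\frac{\norm{\Delta w}^2}{k}(\sum_i\sigma_i^2(M)+\sigma_{max}^2(M))$, as claimed.

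I expect the fourth-moment bookkeeping in the evaluation of $\Sigma$ to be the main obstacle: one must carefully enumerate which of the four-index products $\Expect[d_m d_{m'} d_i d_j]$ survive and keep the diagonal $(\mu_4-3)$ term controlled, since it is precisely this term that pins down the distributional assumption $\mu_4=3$ needed for the clean stated bounds. Everything after $\Sigma$ is a one-line eigenvalue estimate.
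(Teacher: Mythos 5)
Your proposal is correct: it arrives at exactly the same intermediate identity as the paper, namely the exact covariance $\covm{v}=\frac{1}{k}\bigl(\norm{\Delta w}^2 M M^T+(M\Delta w)(M\Delta w)^T\bigr)$, and then finishes with the identical final step of sandwiching the residual term $\Delta w^T M^T M\,\Delta w=\norm{M\Delta w}^2$ between $0$ and $\sigma_{max}^2(M)\norm{\Delta w}^2$. Where you genuinely differ is in how that covariance is derived. The paper works entry-wise: it expands $\Expect(v_s v_t)$ as a six-index sum over the entries of $A$, partitions the surviving index patterns into cases (e.g.\ $ij=kj=i'j'=k'j'$ versus $ij=kj\neq i'j'=k'j'$, etc.), and invokes the fourth-moment value $\rho=3$ to collapse these cases into the two rank structures. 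You instead factor $M$ out first, writing $\covm{v}=M\,\Sigma\,M^T$ with $\Sigma=\Expect[(AA^T-I)\Delta w\Delta w^T(AA^T-I)]$, decompose $AA^T$ into independent rank-one column contributions, and confine all fourth-moment bookkeeping to a single column via $\Expect[c^2 dd^T]$; the trace bound then follows from cyclicity and the eigenvalues of $M^TM$. Your route buys two things: the combinatorics is lighter and less error-prone (a pairing argument over two indices on one column, rather than a six-index case analysis), and the role of the fourth moment is cleanly isolated --- your $(\mu_4-3)\,\mathrm{diag}(\Delta w_i^2)$ correction makes explicit that the stated bounds need $\mu_4=3$ (satisfied by Gaussian entries and by the Achlioptas sparse distribution the paper actually uses), an assumption that is invisible in the main-text statement of the lemma and of Lemma~\ref{lem:random}, and is only stipulated in the paper's appendix. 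The paper's entry-wise expansion, conversely, needs nothing beyond the definition of covariance and raw moment computations. Both are sound; yours is arguably the more transparent derivation of the same identity, and it additionally documents exactly how the bound would degrade if a projection distribution with $\mu_4\neq 3$ were substituted.
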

\begin{proof}
See Section~\ref{sec:boundcovproof}.
\end{proof}

The covariance is small if $k$, the dimension of the
projected space, is large. But increasing $k$ proportionally increases
the overhead of the parallel algorithm. Similarly, covariance is small if the projection happens on
small $\Delta w$. Looking at Equation~\ref{eq:MA}, this means that $w_{i-1}$
should be as close to $w_s$ as possible, implying that processors
should communicate frequently enough such that their models are
roughly in sync. Finally, the singular values of $M$ should be as
small as possible. The next section describes a crucial optimization that achieves this. 

\subsection{Reducing the Variance}
Equation~\ref{eqn:mc} suggests that when $\alpha_i$ is small, the model combiner
$M_D(w)$ is dominated by the $I$ term. From Lemma~\ref{lemma:lowrank}
in Section~\ref{sec:lowrank} shows that the combiner matrix $M_D(w)$ generated
from $n$ examples, $M_D(w) - I$ has at most $n$ non-zero singular
values. Because each
processor operates on a subset of the data it is likely that $n$
examples $\ll$ $f$ features. We use these observations to lower the
variance of dimensionality reduction by projecting the matrix $N_D$
instead of $M_D$. This optimization is crucial for the scalability of
\ouralgo. 

With this optimization the model combiner update becomes
\begin{align} 
w_i \approx &S_{D_i}(w_g) + w_{i-1} - w_g \nonumber\\
            &+ N_{D_i}(w_g)\cdot A \cdot A^T \cdot (w_{i-1} - w_s) \label{eq:NA}
\end{align}
Lemma~\ref{lem:random} guarantees that the approximation above is unbiased. 

An important factor in controlling the singular values of $N_{D_k}(w_{g})$ is
the frequency of model combinations which is a tunable parameter in
\ouralgo. As it is shown in Appendix~\ref{sec:singular}, with more
communication, the smaller the singular values of $N_{D_k}(w_{g})$ and the
less variance (error) in Equation~\ref{eq:NA}.

\subsection{Empirical Evaluating Singular Values of $M_D(w)$}
\label{sec:singular}
Figure~\ref{fig:eigens} empirically demonstrates the benefit of taking
identity off. This figure plots the singular values of $M_D(w)$ for
RCV1 (described in Section~\ref{sec:eval}) after processing
$64,128,256,512$ examples for logistic and linear regression. As it
can be seen, the singular values are close to $1$.  However, the
singular values of $N_{D}(w) = M_D(w)-I$ are roughly the same as those of
$M_D(w)$ minus $1$ and consequently, are small. Finally, the smaller
$\alpha$ (not shown), the closer the singular values of $M_D(w)$ are
to $1$ and the singular values of $N_{D}(w)$ are close to $0$. Also, note
that the singular values of $M_D(w)$ decrease as the numbers of
examples increase and therefore, the singular values of $N_{D}(w)$
increase. As a result, the more frequent the models are combined, the
less variance (and error) is introduced.  

\begin{figure*}[h]
  \centering
  \includegraphics[width=0.9\textwidth]{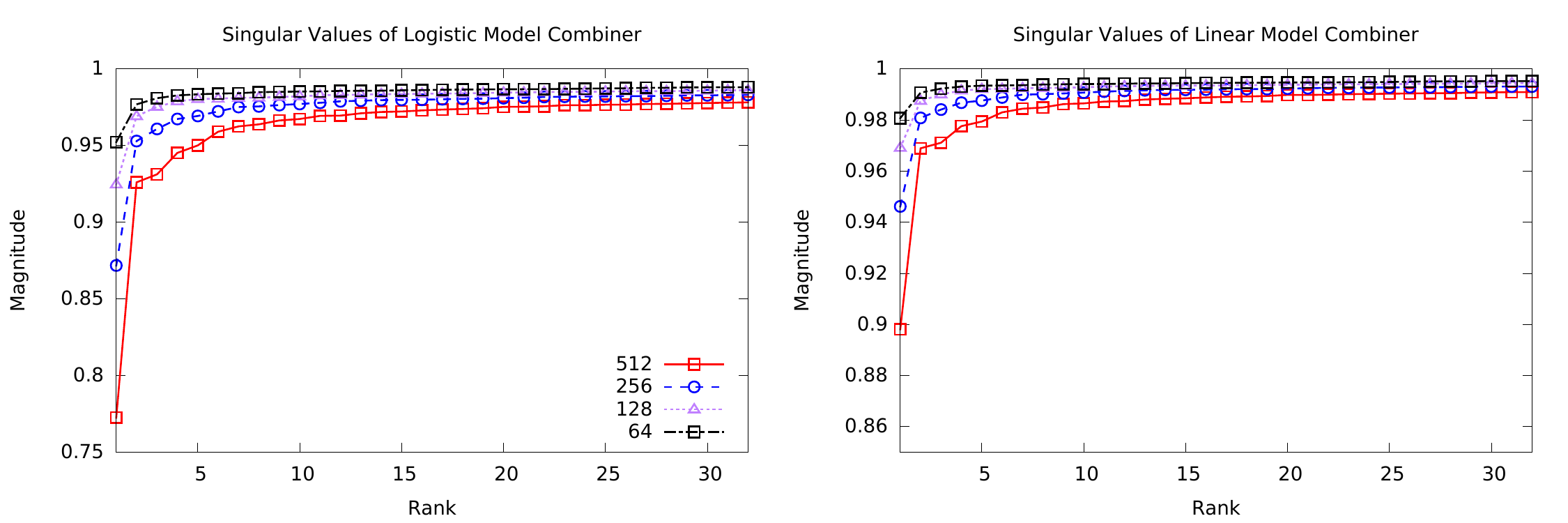}
  \caption{Distribution of singular values of model combiners for RCV1 dataset for logistic regression with 
	$\alpha=0.01$ and for linear regression with $\alpha=0.001$. Different lines correspond to different 
	block sizes.}
  \label{fig:eigs}
\end{figure*}


\section{Parallel \ouralgo Implementation}
\label{sec:impl}
\begin{table*}
\centering
\resizebox{\textwidth}{!}{
\begin{tabular} { c | c | c | c | c | c | c | c | c }
\multirow{2}{*}{Dataset} & \multirow{2}{*}{\#Feat} & \multirow{2}{*}{\#Examples} & \multirow{2}{*}{Average NNZ} & Average NFNZ & \multicolumn{2}{c|}{AUC} & \multicolumn{2}{c}{SymSGD speedup over Hogwild} \\
\cline{6-9}
                         &                         &                             &                              & Ratio        & Logistic & Linear        & Logistic & Linear \\
\hline \hline
  RCV1    & 47153    & 781265  & 74.71   & 0.219 & 0.9586 & 0.959  & 2.60 & 2.60 \\
  AdClick & 3076     & 499980  & 969.38  & 0.947 & 0.7443 & 0.7654 & 2.99 & 2.94 \\
  Epsilon & 2000     & 400000  & 2000    & 1.00  & 0.9586 & 0.959  & 2.55 & 2.45 \\
  URL     & 3231961  & 1677282 & 111.62  & 0.765 & 0.9991 & 0.9986 & 1.90 & 1.04 \\  
  Criteo  & 1703961  & 1000000 & 33.32   & 0.530 & 0.7627 & 0.7633 & 2.05 & 1.91 \\
  Webspam & 16609143 & 279999  & 3727.75 & 0.693 & 0.9992 & 0.9909 & 1.43 & 1.78
\end{tabular}}
\caption{Datasets characteristics.}
\label{tab:datasets}
\end{table*}

This section discusses the \ouralgo
implementations. Section~\ref{sec:parallel:algo} gives a general
specification of a parallel SGD algorithm where
Section~\ref{sec:model:combiners} describes how to build model
combiners.  There are many ways to implement these general
specifications and in this section we discuss some of our
implementation strategies for shared-memory machine.

Section~\ref{sec:parallel:algo} describes a map-reduce style version
of \ouralgo which we call \MR.  In contrast to our algorithm, \HW
asynchronously updates the model parameters.  Because \MR requires
computing model combiners, it does strictly more work than \HW and is
thus a constant factor slower, theoretically. However, even sparse
datasets have a frequently used subset of features which are likely to
show up in many input examples and as we show in
Section~\ref{sec:eval}, this frequent subset causes scalability issues
for \HW. When cache-lines are invalidated across sockets, which
happens often for these frequently accessed subset, \HW incurs large
overheads which limit its scalability.

\AS is a hybrid implementation of \ouralgo which blends asynchronous
updates of infrequent model parameters with \MR style updates for the
frequent ones.  Because the frequently accessed subset of features is
often much smaller than the infrequently accessed ones, \AS has
low-overhead, like \HW. However, because cache-lines are not
invalidated as often, it scales to multiple sockets.

The 5th column in Table~\ref{tab:datasets} (Average NFNZ Ratio) shows
the average number of frequent features in each input example divided
by the number of non-zero features in that input example.  A value of
0 means all features are infrequent and 1 means all features are
frequent. We define a frequent feature as to whether a particular
feature shows up in at least 10\% of the input examples. At runtime,
\AS samples 1000 input examples to find frequent features and builds a
model combiner for that subset and asynchronously updates those
features not in that subset.

\para{Frequency of Model Combination}
\label{sec:syncfreq}
Equation~\ref{tayloreqn} shows that the error in \ouralgo is dependent
on the norm of $\Delta w$; the smaller the norm of $\Delta w$, the
less the error. The way that we control the norm of $\Delta w$ is by
limiting the number of examples that each processor sees before it
combines its local model with the global model. We call this parameter
the {\em block size}.  The trade-offs of high and low values of block
size are clear: large block size allows the \ouralgo communicate less
often and improve overall running time but can potentially suffer in
accuracy due to size of $\Delta w$. On the other hand, low values for
block size enjoys better convergence but the overhead of model
combination may affect the performance.

Block size is set to a constant value per benchmark (part of a
parameter sweep discussed in Section~\ref{sec:impl}) throughout the
execution of \ouralgo.  In future work we expect to dynamically adjust
when to communicate by measuring the norm of $\Delta w$.

\para{Details}
While, in theory, the computational complexity of computing a model
combiner $O(k)$ (where $k < 15$ in all experiments), we do not see a
$k\times$ slowdown. Each processor consecutively stores each of the
$k$ vectors in $A$ so \ouralgo can exploit good cache locality in
addition to SIMD units.  This is apparent in our experiments:
Figure~\ref{fig:log} shows that the difference between \AS and \HW at
1 processor is almost 0 even though the former does $k$ times more
work than the latter.

Lastly, \ouralgo uses a sparse projection~\cite{verysparse} to further
reduce the overhead of computing $A$. Each element of $A$ is
independently chosen from $\{\frac{1}{3}, -\frac{1}{3},0\}$ with
probability $\{\frac{1}{6}, \frac{1}{6}, \frac{2}{3}\}$,
respectively. This approach sparsifies $A$ but it still satisfies
Lemma~\ref{lem:random}.

\section{Evaluation}
\label{sec:eval}
\newcommand{\baseline}{Baseline\xspace}
\newcommand{\hwpaper}{HW-Paper\xspace}
\newcommand{\hwrel}{HW-Release\xspace}
\newcommand{\hwour}{HogWild\xspace}

\begin{figure*}[h]
  \centering
  \includegraphics[width=0.9\textwidth]{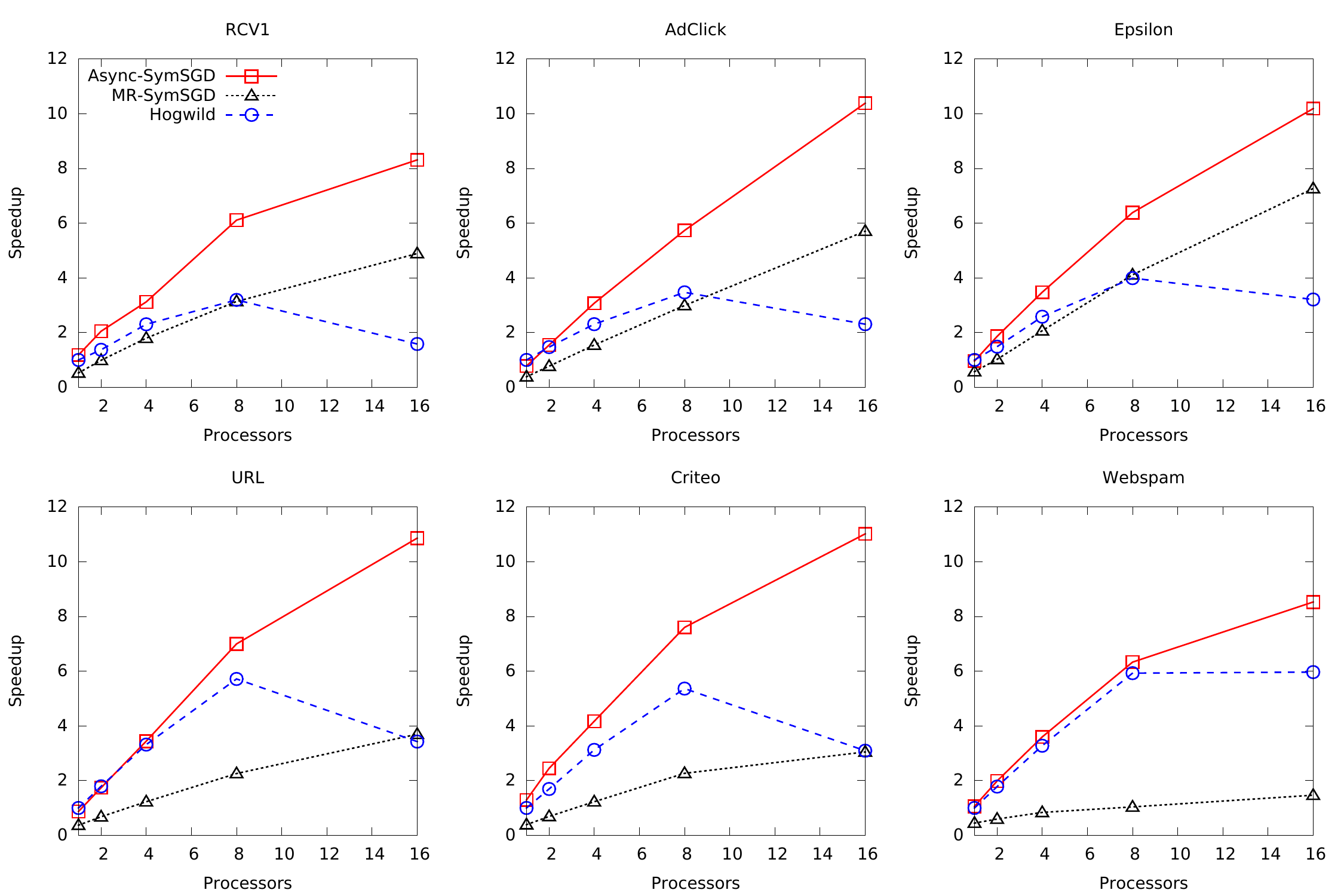}
  \caption{Speedup of logistic regression training when using \AS, \MR, and \HW on a 16 core machine.}
  \label{fig:log}
\end{figure*}

All experiments described in this section were performed on an Intel
Xeon E5-2630 v3 machine clocked at 2.4 GHz with 256 GB of RAM. The machine has two
sockets with 8 cores each, allowing us to study the scalability of the
algorithms across sockets. We disabled hyper-threading and turbo boost.
We also explicitly pinned threads to cores in a compact way which means
that thread $i+1$ was placed as close as possible to thread $i$.
The machine runs Windows 10. All of our implementations were compiled
with Intel C/C++ compiler 16.0 and relied heavily on OpenMP primitives
for parallelization and MKL for efficient linear algebra computations.
And, finally, to measure runtime, we use the average of five
independent runs on an otherwise idle machine.

\para{Algorithms}
Section~\ref{sec:impl} discusses how we implement three SGD
algorithms: \AS, \MR, and \HW. For each, we experimented with ordinary
least squares (OLS) regression and logistic regression (See
Table~\ref{tab:model:combiners} for the model combiners). This section
presents results for logistic regression.  The results for OLS are
similar so we present them in Appendix~\ref{sec:ols}.

When studying the scalability of a parallel algorithm, it is important
to compare the algorithms against an efficient
baseline~\cite{hpc-tenpitfalls,frankmcsherry}. Otherwise, it is
empirically not possible to differentiate between the scalability
achieved from the parallelization of the inefficiencies and the
scalability inherent in the algorithm. We spent a significant effort
to implement a well-tuned version of all algorithms. For example, \AS,
\MR, and \HW with 1 thread are between $1.5\times$ to $4.2\times$ faster
than \vowpal~\cite{vowpal}, a widely used public library.

\para{Datasets}
Table~\ref{tab:datasets} describes various statistics of each
benchmark. They are all freely available, with the exception of
AdClick, which is an internal Ad dataset. For each algorithm and benchmark, we did parameter sweep
over the learning rate, $\alpha$, and picked that $\alpha$ which gave
the best AUC after 10 passes over the data.  For \AS and \MR, we then
fixed $\alpha$ and swept over {\em block size} and $k$ and picked the
configuration which maintained sequential accuracy up to the fourth
digit.

\para{Results}
The last two columns of Table~\ref{tab:datasets} summarize the speedup
of \AS over \HW for both logistic and linear regression.  \AS is, on
average, 2.25X faster than \HW.  The reason is that \AS is able to
scale to multiple sockets: cache-traffic from the frequent subset of
each example causes \HW to suffer scalability when moving from 8 to 16
cores.  Figure~\ref{fig:log} shows this phenomenon in greater
detail. A point on this graph (x-axis, y-axis) shows the speedup of
\AS, \MR, and \HW, respectively (y-axis) as a function of the number
of threads (x-axis).  In all benchmarks, \HW is slower on 16 threads
than 8 (with the exception of Webspam wherein performance stays
roughly constant).  In contrast both \AS and \MR scale across sockets
roughly linearly.  Because \AS uses a model combiner only for those
frequently accessed subset of features, its overhead is lower than \MR
and is thus consistently faster.  The results are similar for linear
regression with the exception of URL: \AS stops scaling at 8 threads,
like \HW.

\section{Related Work}
Most schemes for parallelizing SGD learn
local models independently and communicate to update the global model.
The algorithms differ in how and how often the update is performed.
These choices determine the applicability of the algorithm to
shared-memory or distributed systems.

To the best of our knowledge, our approach is the only one 
that seeks to retain the semantics of the sequential
SGD algorithm. 
Given a tight coupling of the processing units, Langford et
al.~\cite{langford2009slow} suggest on a round-robin scheme to update
the global model allowing for some staleness. However, as the SGD
computation per example is usually much smaller when compared to the
locking overhead, \hogwild~\cite{hogwild} improves on this approach to
perform the update in a ``racy'' manner. While \hogwild is
theoretically proven to achieve good convergence rates provided the
dataset is sparse enough and the processors update the global model
fast enough, our experiments show that the generated cache-coherence
traffic limits its scalability particularly across multiple sockets.
Lastly, unlike \ouralgo, which works for both
sparse and dense datasets, \hogwild is explicitly designed for sparse data.
Recently, \cite{hogbatch} proposed applying lock-free \hogwild approach
to mini-batch. However, mini-batch converges slower than SGD and also 
they did not study multi-socket scaling.

Zinkevich et al.~\cite{naiveparallel} propose a MapReduce-friendly
framework for SGD.  The basic idea is for each machine/thread to run a
sequential SGD on its local data. At the end, the global model is
obtained by averaging these local models. Our experiments with this
approach show it converges very slow in comparison to a sequential
algorithm because the model parameters derived from sparse features
are penalized by that average at every step. Alekh et
al.~\cite{allreduce} extend this approach by using MPI\_AllReduce
operation. Additionally, they use the adagrad~\cite{adagrad} approach
for the learning rates at each node and use weighted averaging to
combine local models with processors that processed a feature more
frequently having a larger weight. Our experiments on our datasets and
implementation shows that it does not achieve the sequential accuracy
for similar reasons as Zinkevich et al.

Several distributed frameworks for machine learning are based on
parameter server~\cite{parameterServerNIPS, parameterServerOSDI} where
clients perform local learning and periodically send the changes to a
central parameter server that applies the changes. For additional
parallelism, the models themselves can be split across multiple
servers and clients only contact a subset of the servers to perform
their updates.

Lastly, there is a significant body of work in the high-performance
computing literature on linear solvers.  For example, MKL has
optimized routines for dense linear least squares
problems~\cite{mkl}.  We found these routines to be significantly
slower than even our sequential baseline running OLS on dense datasets
and MKL does not deal with non-linear terms nor sparse data.
Likewise, randomized numerical linear algebra methods, like RandNLA,
use random projections to solve linear least squares problems
quickly~\cite{Drineas}.  While both our technique and RandNLA use
randomized projections, our insight of taking $I$ off of the matrix we
project is a critical step to controlling the accuracy of our
approach.  Further, RandNLA is specific to linear least squares.


\section{Conclusion}
With terabytes of memory available on multicore machines today, our
current implementation has the capability of learning from large
datasets without incurring the communication overheads of a
distributed system. That said, we believe the ideas in this paper
apply to distributed SGD algorithms we plan to pursue in future work.

\bibliography{paper}

\begin{thebibliography}{18}
\providecommand{\natexlab}[1]{#1}
\providecommand{\url}[1]{\texttt{#1}}
\expandafter\ifx\csname urlstyle\endcsname\relax
  \providecommand{\doi}[1]{doi: #1}\else
  \providecommand{\doi}{doi: \begingroup \urlstyle{rm}\Url}\fi

\bibitem[Achlioptas(2001)]{verysparse}
Achlioptas, Dimitris.
\newblock Database-friendly random projections.
\newblock In \emph{Proceedings of the Twentieth ACM SIGMOD-SIGACT-SIGART
  Symposium on Principles of Database Systems}, PODS '01, pp.\  274--281, New
  York, NY, USA, 2001. ACM.
\newblock ISBN 1-58113-361-8.
\newblock \doi{10.1145/375551.375608}.
\newblock URL \url{http://doi.acm.org/10.1145/375551.375608}.

\bibitem[Agarwal et~al.(2014)Agarwal, Chapelle, Dud{\'{\i}}k, and
  Langford]{allreduce}
Agarwal, Alekh, Chapelle, Olivier, Dud{\'{\i}}k, Miroslav, and Langford, John.
\newblock A reliable effective terascale linear learning system.
\newblock \emph{Journal of Machine Learning Research}, 15\penalty0
  (1):\penalty0 1111--1133, 2014.
\newblock URL \url{http://dl.acm.org/citation.cfm?id=2638571}.

\bibitem[AWS-X1()]{aws-x1-instance}
AWS-X1.
\newblock Amazon ec2 x1 instances.
\newblock \url{https://aws.amazon.com/ec2/instance-types/x1/}, 2017.
\newblock [accessed Feb 2017].

\bibitem[Bailey(1991)]{hpc-tenpitfalls}
Bailey, David.
\newblock Twelve ways to fool the masses.
\newblock \url{http://crd-legacy.lbl.gov/~dhbailey/dhbpapers/twelve-ways.pd},
  1991.

\bibitem[Bottou(2012)]{bottou}
Bottou, L\'{e}on.
\newblock Stochastic gradient tricks.
\newblock In Montavon, Gr\'{e}goire, Orr, Genevieve~B., and M\"{u}ller,
  Klaus-Robert (eds.), \emph{Neural Networks, Tricks of the Trade, Reloaded},
  Lecture Notes in Computer Science (LNCS 7700), pp.\  430--445. Springer,
  2012.

\bibitem[Databricks()]{databricks}
Databricks.
\newblock Spark survey 2015 results are now available.
\newblock
  \url{https://databricks.com/blog/2015/09/24/spark-survey-2015-results-are-now-available.html},
  2015.
\newblock [accessed Feb 2017].

\bibitem[Drineas \& Mahoney(2016)Drineas and Mahoney]{Drineas}
Drineas, Petros and Mahoney, Michael~W.
\newblock Randnla: Randomized numerical linear algebra.
\newblock \emph{Commun. ACM}, 59\penalty0 (6):\penalty0 80--90, May 2016.
\newblock ISSN 0001-0782.
\newblock \doi{10.1145/2842602}.
\newblock URL \url{http://doi.acm.org/10.1145/2842602}.

\bibitem[Duchi et~al.(2011)Duchi, Hazan, and Singer]{adagrad}
Duchi, John, Hazan, Elad, and Singer, Yoram.
\newblock Adaptive subgradient methods for online learning and stochastic
  optimization.
\newblock \emph{J. Mach. Learn. Res.}, 12:\penalty0 2121--2159, July 2011.
\newblock ISSN 1532-4435.
\newblock URL \url{http://dl.acm.org/citation.cfm?id=1953048.2021068}.

\bibitem[Intel()]{mkl}
Intel.
\newblock Intel math kernel library.
\newblock \url{https://software.intel.com/en-us/intel-mkl}.

\bibitem[Johnson \& Lindenstrauss(1984)Johnson and Lindenstrauss]{jllemma}
Johnson, William and Lindenstrauss, Joram.
\newblock Extensions of {L}ipschitz mappings into a {H}ilbert space.
\newblock In \emph{Conference in modern analysis and probability (New Haven,
  Conn., 1982)}, volume~26 of \emph{Contemporary Mathematics}, pp.\  189--206.
  American Mathematical Society, 1984.

\bibitem[Langford et~al.(2007)Langford, Li, and Strehl]{vowpal}
Langford, John, Li, Lihong, and Strehl, Alex.
\newblock Vowpal {W}abbit, 2007.

\bibitem[Langford et~al.(2009)Langford, Smola, and Zinkevich]{langford2009slow}
Langford, John, Smola, Alexander, and Zinkevich, Martin.
\newblock Slow learners are fast.
\newblock \emph{arXiv preprint arXiv:0911.0491}, 2009.

\bibitem[Li et~al.(2014{\natexlab{a}})Li, Andersen, Park, Smola, Ahmed,
  Josifovski, Long, Shekita, and Su]{parameterServerOSDI}
Li, Mu, Andersen, David~G., Park, Jun~Woo, Smola, Alexander~J., Ahmed, Amr,
  Josifovski, Vanja, Long, James, Shekita, Eugene~J., and Su, Bor-Yiing.
\newblock Scaling distributed machine learning with the parameter server.
\newblock In \emph{11th USENIX Symposium on Operating Systems Design and
  Implementation (OSDI 14)}, pp.\  583--598, Broomfield, CO, October
  2014{\natexlab{a}}. USENIX Association.
\newblock ISBN 978-1-931971-16-4.
\newblock URL
  \url{https://www.usenix.org/conference/osdi14/technical-sessions/presentation/li_mu}.

\bibitem[Li et~al.(2014{\natexlab{b}})Li, Andersen, Smola, and
  Yu]{parameterServerNIPS}
Li, Mu, Andersen, David~G, Smola, Alex~J, and Yu, Kai.
\newblock Communication efficient distributed machine learning with the
  parameter server.
\newblock In Ghahramani, Z., Welling, M., Cortes, C., Lawrence, N.~D., and
  Weinberger, K.~Q. (eds.), \emph{Advances in Neural Information Processing
  Systems 27}, pp.\  19--27. Curran Associates, Inc., 2014{\natexlab{b}}.
\newblock URL
  \url{http://papers.nips.cc/paper/5597-communication-efficient-distributed-machine-learning-with-the-parameter-server.pdf}.

\bibitem[McSherry et~al.(2015)McSherry, Isard, and Murray]{frankmcsherry}
McSherry, Frank, Isard, Michael, and Murray, Derek~G.
\newblock Scalability! but at what cost?
\newblock In \emph{15th Workshop on Hot Topics in Operating Systems (HotOS
  XV)}, Kartause Ittingen, Switzerland, May 2015. USENIX Association.
\newblock URL
  \url{https://www.usenix.org/conference/hotos15/workshop-program/presentation/mcsherry}.

\bibitem[Recht et~al.(2011)Recht, Re, Wright, and Niu]{hogwild}
Recht, Benjamin, Re, Christopher, Wright, Stephen, and Niu, Feng.
\newblock Hogwild: A lock-free approach to parallelizing stochastic gradient
  descent.
\newblock In \emph{Advances in Neural Information Processing Systems}, pp.\
  693--701, 2011.

\bibitem[Sallinen et~al.(2016)Sallinen, Satish, Smelyanskiy, Sury, and
  R]{hogbatch}
Sallinen, S., Satish, N., Smelyanskiy, M., Sury, S.~S., and R, C.
\newblock High performance parallel stochastic gradient descent in shared
  memory.
\newblock In \emph{2016 IEEE International Parallel and Distributed Processing
  Symposium (IPDPS)}, pp.\  873--882, May 2016.
\newblock \doi{10.1109/IPDPS.2016.107}.

\bibitem[Zinkevich et~al.(2010)Zinkevich, Weimer, Li, and Smola]{naiveparallel}
Zinkevich, Martin, Weimer, Markus, Li, Lihong, and Smola, Alex~J.
\newblock Parallelized stochastic gradient descent.
\newblock In \emph{Advances in neural information processing systems}, pp.\
  2595--2603, 2010.

\end{thebibliography}
\bibliographystyle{icml2017}

\label{lem:proof}
\label{sec:singular}
\label{fig:eigen1}
\label{fig:eigen2}
\label{fig:eigens}
\label{sec:ols}
\label{fig:lin}
\label{appendix:algos}
\label{appendix:speedups}
\label{fig:aloispeed}
\label{fig:sectorspeedup}
\label{fig:mnistspeed}
\label{fig:mnist8mspeed}
\label{fig:news20speed}
\label{fig:urlspeed}
\label{fig:speedups}
\label{sec:boundcovproof}
\label{sec:convproof}
\label{eqn:oursgd}
\label{eqn:taylor}
\label{ass:convex}
\label{ass:boundgrad}
\label{fr-bound}
\label{sr-bound}
\label{cexpleq}
\label{sec:lowrank}
\label{lemma:lowrank}
\onecolumn
\section{Appendix}
\subsection{Ordinary Least Squares Regression Results}
\label{sec:ols}
\begin{figure*}[h]
  \centering
  \includegraphics[width=0.9\textwidth]{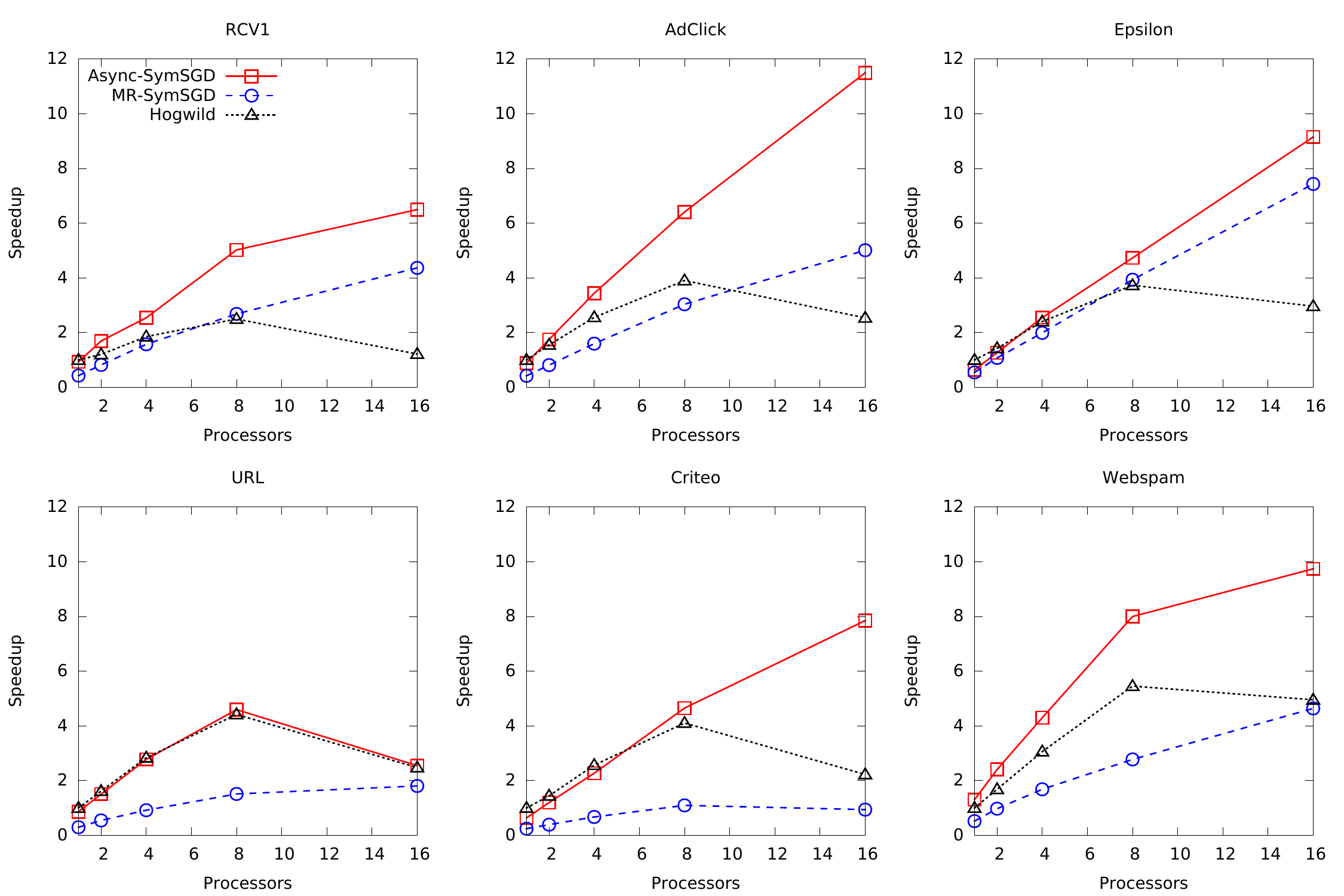}
  \caption{Speedup of Least Squares regression training when using \AS, \MR, and \HW on a 16 core machine.}
  \label{fig:lin}
\end{figure*}

\newcommand{\cov}[1]{\mathrm{Cov}(#1)}
\newcommand{\rank}{{\rm rank}}

\subsection{Variance and Covariance of $\frac{1}{r}M\cdot A\cdot A^T \cdot \Delta w$}
In here, for the sake of simplicity, we use $w$ instead of $\Delta w$
and instead of $k$ for the size of the projected space, we use $r$ since $k$ is used for summation 
indices in here, heavily.
We want to estimate $v=M\cdot w$ with $\frac{1}{r} M\cdot A\cdot A^T\cdot w$, where $A$ is a $f
\times r$ matrix, where $a_{ij}$ is a random variable with the following properties: 
$\Expect(a_{ij}) = 0$, $\Expect(a_{ij}^2) = 1$, and $\Expect(a_{ij}^4) = \rho = 3$. 

Let $m^T_s$ be some row of $M$. Its estimation in $M\cdot w$ is $v_s =
\frac{1}{r}\cdot m^T_s\cdot A\cdot A^T\cdot w$. From Lemma~\ref{lem:random} $\Expect(v_s) = m^T_s\cdot w$. 


We will use the notation $ij = kl$ to mean $i=k \wedge j=l$, and $ij
\neq kl$ to mean its negation. 
Let $m_s$, $m_{t}$ be two rows of $M$. We want to find the
covariance of the resulting $v_s$ and $v_{t}$. 

\begin{align*}
&r^2 \cdot \Expect(v_s,v_{t}) &\\
&= r^2\cdot \Expect(\frac{1}{r^2} \sum_{i,j,k} m_{si} a_{ij} a_{kj} w_k \cdot\sum_{i',j',k'} m_{ti'} a_{i'j'} a_{k'j'} w_{k'})\\
&= \sum_{i,j,k,i',j',k'} m_{si} m_{ti'} w_k w_{k'} \Expect(a_{ij} a_{kj} a_{i'j'} a_{k'j'})\\
&=\sum_{i,j,k,i',j',k':ij=kj=i'j'=k'j'} m_{si} m_{ti'} w_k w_{k'} \Expect(a_{ij} a_{kj} a_{i'j'} a_{k'j'})\\
&+\sum_{i,j,k,i',j',k':ij=kj\neq i'j'=k'j'} m_{si} m_{ti'} w_k w_{k'} \Expect(a_{ij} a_{kj} a_{i'j'} a_{k'j'}) \\
&+\sum_{i,j,k,i',j',k':ij=i'j'\neq kj=k'j'} m_{si} m_{ti'} w_k w_{k'} \Expect(a_{ij} a_{kj} a_{i'j'} a_{k'j'}) \\
&+\sum_{i,j,k,i',j',k':ij=k'j'\neq i'j'=kj} m_{si} m_{ti'} w_k w_{k'} \Expect(a_{ij} a_{kj} a_{i'j'} a_{k'j'}) 
&\text{as terms with $\Expect(a_{ij})$ cancel out}\\
&= \sum_{i,j} m_{si} m_{ti} w_i w_i \rho 
+ \sum_{i,j,i',j':ij\neq i'j'} m_{si} m_{ti'} w_i w_{i'} \\
&+ \sum_{i,j,k:i\neq k} m_{si} m_{ti} w_k w_k
+\sum_{i,j,k:i\neq k} m_{si} m_{tk} w_k w_i
&\text{as $\Expect(a_{ij}a_{kl}) = 1$ when $ij \neq kl$}\\
&= \rho \sum_{i,j} m_{si} m_{ti} w_i^2 \\
&+ \sum_{i,j,i',j'} m_{si} m_{ti'} w_i w_{i'} - \sum_{i,j,i',j':ij
  = i'j'} m_{si} m_{ti'} w_i w_{i'} \\
&+ \sum_{i,j,k} m_{si} m_{ti} w_k^2 - \sum_{i,j,k:i=k} m_{si}
m_{ti} w_k^2 \\
&+\sum_{i,j,k} m_{si} m_{tk} w_k w_i  - \sum_{i,j,k:i=k} m_{si}
m_{tk} w_k w_i\\
&= (\rho-3) \sum_{i,j} m_{si} m_{ti} w_i^2 
+ \sum_{i,j,i',j'} m_{si} m_{ti'} w_i w_{i'} \\
&+ \sum_{i,j,k} m_{si} m_{ti} w_k^2 
+\sum_{i,j,k} m_{si} m_{tk} w_k w_i \\
&= r^2\sum_{i,i'} m_{si} m_{ti'} w_i w_{i'} 
+ r\sum_{i,k} m_{si} m_{ti} w_k^2 
+ r\sum_{i,k} m_{si} m_{tk} w_i w_k & \text{as $\rho=3$ and
  $j\in[1\dots k]$}\\
&= (r^2+r)\sum_{i,i'} m_{si} m_{ti'} w_i w_{i'} + r\cdot
m_s^T\cdot m_{t}\sum_k w_k^2\\
\end{align*}

In other words
$$ \Expect(v_sv_{t}) = (1+\frac{1}{r})\sum_{i,i'} m_{si} m_{ti'} w_i w_{i'} +
\frac{1}{r}\cdot m_s^T\cdot m_{t}\sum_k w_k^2$$
The covariance $\cov{a,b} = \Expect(a\cdot b) - \Expect(a)\Expect(b)$. Using this we have

\begin{align*}
&\cov{v_s, v_{t}} \\
&=(1+\frac{1}{r})\sum_{i,i'} m_{si} m_{ti'} w_i w_{i'} +
\frac{1}{r}\cdot m_s^T\cdot m_{t}\sum_k w_k^2
-\Expect(v_s)\Expect(v_{t})\\
&= (1+\frac{1}{r})\sum_{i,i'} m_{si} m_{ti'} w_iw_{i'} 
+ \frac{1}{r}\cdot m_s^T\cdot m_{t}\sum_k w_k^2
-\Expect(v_s)\Expect(v_{t})\\
&= 
 (1+\frac{1}{r}) \Expect(v_s)\Expect(v_{t})
+ \frac{1}{r}\cdot m_s^T\cdot m_{t}\sum_k w_k^2
-\Expect(v_s)\Expect(v_{t})\\
&= 
\frac{1}{r}\Expect(v_s)\Expect(v_{t})
+ \frac{1}{r}\cdot m_s^T\cdot m_{t}\sum_k w_k^2\\
&= 
\frac{1}{r}\Expect(v_s)\Expect(v_{t})
+ \frac{1}{r}\cdot (M\cdot M^T)_{st}\norm{w}^2\\
&= 
\frac{1}{r}(M\cdot w)_s (M\cdot w)_t
+ \frac{1}{r}\cdot (M\cdot M^T)_{st}\norm{w}^2\\
&= 
\frac{1}{r}((M\cdot w)\cdot (M\cdot w)^T)_{st}
+ \frac{1}{r}\cdot (M\cdot M^T)_{st}\norm{w}^2\\
\end{align*}

Let $\covm{v}$ be the covariance matrix of $v$.  
That is, $\covm{v}_{ij} =
\cov{v_i, v_j}$.
So, we have
$$ \covm{v} = \frac{1}{r}(M\cdot w)\cdot (M\cdot w)^T
+ \frac{1}{r}(M\cdot M^T)\norm{w}^2
$$

Note that we can use this computation for matrix $N=M-I$ as well since we did not assume anything about the matrix $M$
from the beginning. 
Therefore, for $v'=w + \frac{1}{r} N\cdot A\cdot A^T\cdot w$, $\covm{v'}=
\frac{1}{r}(N\cdot w)\cdot (N\cdot w)^T + \frac{1}{r}(N\cdot N^T)\norm{w}^2$ since $w$ is a constant in $v'$ and
$\covm{a+x}=\covm{x}$ for any constant vector $a$ and any probabilistic vector $x$. Next we try to bound
$\covm{v}$.

\subsection{Proof of Lemma~\ref{lem:boundcov}}
\label{sec:boundcovproof}
We can bound $\covm{v}$ by computing its trace since $tr(\covm{v})=\sum_i var(v_i)$, the summation 
of the variance of elements of $v$.
\begin{align*}
tr(\covm{v})
&= \frac{1}{r}tr((M\cdot w)\cdot (M\cdot w)^T) + \frac{1}{r}\norm{w}^2 tr(MM^T)\\
&=\frac{1}{r}\norm{M\cdot w}^2 + \frac{1}{r}\norm{w}^2 \Big(\sum_i \lambda_i(M\cdot M^T)\Big)\\
&=\frac{1}{r}\norm{M\cdot w}^2 + \frac{1}{r}\norm{w}^2 \Big(\sum_i \sigma_i(M)^2\Big)
\end{align*}
where $\lambda_i{M\cdot M^T}$ is the $i^{th}$ largest eigenvalue of $M\cdot M^T$ which is the square of 
$i^{th}$ largest singular value of $M$, $\sigma_i(M)^2$. Since 
$\norm{M\cdot w}^2\leq \norm{w}^2\norm{M}^2=\norm{w}^2 \sigma_{max}(M)^2$, we can bound $tr(\covm{v})$ as follows:
$$tr(\covm{v})\leq \frac{1}{r}(\sigma_{max}(M)^2) + \frac{1}{r}\norm{w}^2 \Big(\sum_i \sigma_i(M)^2\Big)$$

It is trivial to see that:
$$
\frac{1}{r}\norm{w}^2 \Big(\sum_i \sigma_i(M)^2\Big) \leq tr(\covm{v})
$$

Combining the two inequalities, we have:
$$
\frac{1}{r}\norm{w}^2 \Big(\sum_i \sigma_i(M)^2\Big) \leq tr(\covm{v}) \frac{1}{r}(\sigma_{max}(M)^2) + \frac{1}{r}\norm{w}^2 \Big(\sum_i \sigma_i(M)^2\Big)
$$

The same bounds can be derived when $N=M-I$ is used.

\subsection{Rank of Matrix $N = M-I$}
\label{sec:lowrank}
Now we show that subtracting $I$ from a model combiner results in a
matrix with small rank. Thus, most of its singular values are zero. We
assume that the model combiner is generated for a linear learner and
thus it is of the form $\prod_i(I - \alpha x_i x_i^T)$ where any
nonlinear scalar terms from the Hessian are factored into $\alpha$.

\begin{lemma}
For the matrix $M_{a\rightarrow b} =\prod_{i=b}^a (I-\alpha x_i \cdot x_i^T)$, $\rank(M_{a\rightarrow b}-I)\leq b-a$.
\label{lemma:lowrank}
\end{lemma}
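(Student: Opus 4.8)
The plan is to exploit that every factor $I-\alpha x_i x_i^T$ is the identity plus a rank-one matrix, and to show by induction that multiplying in one more such factor raises the rank of the ``deviation from $I$'' by at most one. Write each factor as $F_i = I + R_i$ with $R_i = -\alpha x_i x_i^T$, so that $\rank(R_i)\le 1$, and let $M_{a\rightarrow b} = F_b F_{b-1}\cdots F_a$ be the product of the $b-a$ rank-one updates. Note that the argument never uses the value of $\alpha$, so any nonlinear scalar prefactors absorbed into $\alpha$ are irrelevant; all that matters is that each factor differs from $I$ by a rank-one matrix.

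The key step is a subadditivity lemma: if $A = I+R$ and $B = I+S$, then
$$AB - I = R + S + RS = R(I+S) + S,$$
and every column of $R(I+S)+S$ lies in $\mathrm{col}(R)+\mathrm{col}(S)$. Hence $\rank(AB-I)\le \rank(R)+\rank(S)$. This column-space inclusion is the one computation to get right; the remainder is bookkeeping.

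Now I would induct on the number of factors $m$, proving $\rank(P_m - I)\le m$ for the product $P_m$ of any $m$ factors of the stated form. The base case $m=0$ is the empty product $P_0 = I$, for which $\rank(I-I)=0$. For the inductive step, suppose the product $P$ of the first $m-1$ factors satisfies $P = I + R$ with $\rank(R)\le m-1$; letting the $m$-th factor be $F = I + S$ with $\rank(S)\le 1$, the subadditivity lemma gives $\rank(PF - I)\le (m-1)+1 = m$. Applying this to the $b-a$ factors of $M_{a\rightarrow b}$ yields $\rank(M_{a\rightarrow b}-I)\le b-a$, as claimed. (Equivalently, one can expand the product directly and observe that $M_{a\rightarrow b}-I$ is a sum of terms, each of whose column space is contained in some one-dimensional $\mathrm{col}(R_i)$, so the combined column space has dimension at most $b-a$.)

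The only genuine subtlety is the index bookkeeping: one must confirm that the product contains exactly $b-a$ rank-one factors, so that the induction matches the stated bound, and it is cleanest to anchor the induction at the empty product. I do not expect any real obstacle beyond verifying the column-space inclusion $\mathrm{col}(R+S+RS)\subseteq \mathrm{col}(R)+\mathrm{col}(S)$, which is immediate from the factorization $R(I+S)+S$.
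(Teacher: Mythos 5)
Your proof is correct and follows essentially the same route as the paper's: induction on the number of factors, writing the new product-minus-identity as the old deviation plus a rank-one correction, and invoking subadditivity of rank (the paper peels the new factor off the left, so its rank-one term is $-\alpha x_b (x_b^T M_{a\rightarrow b-1})$, while you append on the right and get $R(I+S)+S$, but this is the same argument). Your handling of the index convention---anchoring at the empty product so that $M_{a\rightarrow b}$ has $b-a$ factors---matches the paper's base case $M_{a\rightarrow a}=I$.
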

\begin{proof}
The proof is by induction. The base case is when $a=b$ and $M_{a\rightarrow b}=I$. It is clear that $I-I=0$ which is
of rank zero. For the inductive step, assume that $\rank(M_{a\rightarrow b-1}-I) \leq b-a-1$. We have

\begin{align*}
M_{a\rightarrow b}-I&=(I-\alpha x_b\cdot x_b^T)M_{a\rightarrow b-1}-I\\
&=(M_{a\rightarrow b-1}-I)-\alpha x_b\cdot (x_b^T\cdot M_{a\rightarrow b-1})
\end{align*}

Term $\alpha x_b\cdot (x_b^T\cdot M_{a\rightarrow b-1})$ is a rank-1 matrix and term $(M_{a\rightarrow b-1}-I)$
is of rank $b-a-1$ by induction hypothesis. Since for any two matrices $A$ and $B$, $\rank(A+B)\leq \rank(A)+\rank(B)$,
$\rank(M_{a\rightarrow b}-I)\leq \rank(M_{a\rightarrow b-1}) + \rank(-\alpha x_b\cdot (x_b^T\cdot M_{a\rightarrow b-1}))
 \leq b-a-1+1=b-a$.
\end{proof}

\subsection{Convergence Proof}
\label{sec:convproof}
\newtheorem{assumption}{Assumption}
Let the sequence $w_0, w_1, \ldots w_t$ represent the sequence of
weight vectors produced by a sequential SGD run. We know that this
sequence converges to the desired minimum $w^*$. Our goal is to show
that \ouralgo also converges to $w^*$. Consider a process
processing example sequences $D$ starting with model $w_t-\Delta w$
that is $\Delta w$ different from the ``true'' model $w_t$ that a
sequential SGD would have started with. The output of this processor
is 
\begin{equation}
\label{eqn:oursgd}
w_{t+1} = S_D(w_t-\Delta w) + M_D\Delta w
\end{equation}
where the model combiner after the projection by taking the I off is
given by 
$$M_D = I + (S'_D(w_t - \Delta w) - I)AA^T$$
Applying Taylor's theorem, we have for some $0 \leq \mu \leq 1$
\begin{equation}
\label{eqn:taylor}
S_D(w_t) = S_D(w_t-\Delta w) + S'_D(w_t-\Delta w)\Delta w
+ \frac{1}{2}\Delta w^TS_D''(w_t-\mu\Delta w) \Delta w
\end{equation}

Comparing Equation~\ref{eqn:taylor} with Equation~\ref{eqn:oursgd}, we see that
\ouralgo introduces two error terms to a sequential SGD
$$ w_{t+1} = S_D(w_t) + FR_D(w_t, \Delta w) + SR_D(w_t,
\Delta w)$$
where the first-order error term $FR$ comes due to the projection approximation
$$FR_D(w_t, \Delta w) = (I- S'_D(w_t-\Delta w))(I - AA^T)$$
and the second-order error term $SR$ comes due to neglecting the
higher-order terms in the Taylor expansion. 
$$ SR_D(w_t, \Delta w)= \frac{1}{2}\Delta w^T S_D''(w_t-\mu\Delta w) \Delta w$$

To prove convergence \ouralgo, we show that SGD is ``robust'' with
respect to adding these error terms. The proof follows along the same
lines as the convergence proof of SGD by Bottou~\cite{bottou} and uses
similar notations. We state below the assumptions and Lemmas required
for the main proof. The proof of these Lemmas is shown later. 

\begin{assumption}
\label{ass:convex}
Convexity of the cost function
$$(w - w^*).G(w) > 0$$ for $w \neq w*$.
\end{assumption}

\begin{assumption}
\label{ass:boundgrad}
Bounded gradients. For any input $z = (X,y)$
$$ \norm{G_z(w)} \leq b_G\norm{w - w^*}$$ 
for some $b_G \geq 0$.
\end{assumption}

\begin{lemma}
\label{fr-bound}
Bounds on the mean and second moment of $FR$
\begin{align*}
E_A(FR_D(w_t, \Delta w)) & = 0\\
E_A(\norm{FR_D(w_t, \Delta w)}^2) & \leq b_{FR} \norm{w_t - w^*}^2
\end{align*}
for some $b_{FR} \geq 0$
\end{lemma}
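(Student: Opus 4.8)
The plan is to treat the two claims separately, taking the random projection matrix $A$ as the only source of randomness while $w_t$, the staleness $\Delta w$, and the deterministic derivative $S'_D(w_t-\Delta w)$ are held fixed (i.e. conditioning on everything computed before this processor draws its fresh $A$). Reading the first-order term in vector form, $FR_D(w_t,\Delta w) = (I - S'_D(w_t-\Delta w))(I-AA^T)\Delta w$, I would first dispatch the mean-zero claim. Since $(I - S'_D(w_t-\Delta w))$ is a constant matrix and $\Delta w$ a constant vector with respect to $E_A$, linearity gives $E_A(FR_D) = (I - S'_D(w_t-\Delta w))\,E_A(I - AA^T)\,\Delta w$. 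Lemma~\ref{lem:random} states $E_A(AA^T)=I$, so $E_A(I - AA^T)=0$ and the first claim follows immediately.

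For the second-moment bound I would use a bias--variance identity to connect directly to Lemma~\ref{lem:boundcov}. Set $N \triangleq I - S'_D(w_t-\Delta w)$, let $u \triangleq N\Delta w$ (deterministic), and let $\xi \triangleq N AA^T\Delta w$ (random), so that $FR_D = u - \xi$. Because $E_A(AA^T)=I$ we have $E_A(\xi)=N\Delta w=u$, hence $FR_D = -(\xi - E_A(\xi))$ and therefore
$$ E_A(\norm{FR_D}^2) = E_A(\norm{\xi - E_A(\xi)}^2) = tr(\covm{\xi}). $$
Now $\xi$ is exactly the random vector of Lemma~\ref{lem:boundcov} with $M$ replaced by $N$, so that lemma gives $E_A(\norm{FR_D}^2) \le \frac{\norm{\Delta w}^2}{k}\big(\sum_i \sigma_i^2(N) + \sigma_{max}^2(N)\big)$. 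Since $N = -(S'_D - I)$ and singular values are invariant under negation, $\sigma_i(N)=\sigma_i(N_D)$ with $N_D = S'_D - I$ the identity-subtracted combiner.

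The remaining work is to collapse this right-hand side into a constant multiple of $\norm{w_t - w^*}^2$. I would first invoke Lemma~\ref{lemma:lowrank}: $N_D$ has rank at most the block size $n$, so at most $n$ of its singular values are nonzero and $\sum_i\sigma_i^2(N_D)+\sigma_{max}^2(N_D) \le (n+1)\sigma_{max}^2(N_D)$. Using the product form $S'_D = \prod_i(I-\alpha_i H_{z_i})$ together with $\norm{\prod_i(I-B_i)-I}\le \prod_i(1+\norm{B_i})-1$, a uniform bound on the learning rates and the Hessian norms bounds $\sigma_{max}(N_D)=\norm{N_D}$ by an absolute constant $b_N$. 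Finally I would bound the staleness by $\norm{\Delta w}\le \kappa\,\norm{w_t-w^*}$; combining these and setting $b_{FR} = (n+1)b_N^2\kappa^2/k$ yields the claim.

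The main obstacle is this final collapse rather than the probabilistic core: the reduction of $E_A(\norm{FR_D}^2)$ to $tr(\covm{\xi})$ is clean and Lemma~\ref{lem:boundcov} does the heavy lifting. What genuinely needs care is controlling $\norm{\Delta w}$ and $\sigma_{max}(N_D)$ purely in terms of $\norm{w_t-w^*}$. The bounded-gradient Assumption~\ref{ass:boundgrad} governs the per-step drift but does not directly bound the Hessian $H_{z_i}$ entering $N_D$, so the argument appears to require an auxiliary bounded-curvature hypothesis together with an explicit bound on the staleness $\Delta w$. I would make both precise before asserting that the constant $b_{FR}$ is well-defined.
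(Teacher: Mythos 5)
Your proof is correct and follows essentially the same route as the paper's: the mean-zero claim is a restatement of Lemma~\ref{lem:random}, and the second moment is obtained by reducing $E_A(\norm{FR_D(w_t,\Delta w)}^2)$ to the covariance-trace bound of Lemma~\ref{lem:boundcov} and then controlling the singular values of the identity-subtracted combiner together with the staleness $\norm{\Delta w}$. The two hypotheses you flag as needing to be made precise are exactly the paper's Assumption~\ref{ass:bdw} (staleness bound $\norm{\Delta w} \leq \min(1, b_{\Delta w}\norm{w_t - w^*})$) and Assumption~\ref{ass:bh} with Lemma~\ref{lem:eigen} (bounded Hessian, hence bounded combiner eigenvalues), so the constant $b_{FR}$ you construct is well-defined under the paper's stated assumptions.
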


\begin{lemma}
\label{sr-bound}
Bounds on $SR$
\begin{align*}
\norm{SR_D(w_t, \Delta w)} \leq b_{SR} \norm{w_t - w^*}
\end{align*}
for some $b_{FR} \geq 0$
\end{lemma}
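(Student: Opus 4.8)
The plan is to control the remainder $SR_D(w_t,\Delta w)=\tfrac{1}{2}\Delta w^T S_D''(w_t-\mu\Delta w)\Delta w$ through three independent estimates: a crude operator-norm bound that converts the quadratic form into $\norm{S_D''}\cdot\norm{\Delta w}^2$; a linear bound on the staleness $\norm{\Delta w}$ in terms of $\norm{w_t-w^*}$; and a uniform bound on the operator norm of the second-derivative tensor $S_D''$. Since for each output coordinate $k$ the slice $(S_D'')_k$ is the symmetric Hessian of the $k$th component map, I would first write
$$\norm{SR_D(w_t,\Delta w)}\le \tfrac{1}{2}\,\norm{S_D''(w_t-\mu\Delta w)}\,\norm{\Delta w}^2,$$
where $\norm{S_D''}$ denotes the induced tensor norm. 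The remaining work is to bound the two factors.

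To bound $\norm{\Delta w}$, I would observe that the staleness is the accumulation of the intervening SGD updates separating the processor's starting model from the true model $w_t$. Each such update has magnitude $\alpha_i\norm{G_{z_i}}$, and by Assumption~\ref{ass:boundgrad} each gradient satisfies $\norm{G_{z_i}(w)}\le b_G\norm{w-w^*}$. Summing over the at most block-size many intervening steps and using that every intermediate iterate stays within a fixed multiple of $\norm{w_t-w^*}$ of the optimum yields a linear bound $\norm{\Delta w}\le c_\Delta\norm{w_t-w^*}$, with $c_\Delta$ depending on the block size, the learning rates, and $b_G$.

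To bound $\norm{S_D''}$, I would use that $S_D=S_{z_n}\circ\cdots\circ S_{z_1}$ with each step $S_{z_i}(w)=w-\alpha_i G_{z_i}(w)$, and apply the chain rule for second derivatives of a composition. This expands $S_D''$ into a sum of $n$ terms, each a product of Jacobian factors $S_{z_i}'=I-\alpha_i H_{z_i}$ together with exactly one Hessian-of-the-map factor $S_{z_j}''=-\alpha_j\deriv{H_{z_j}}{w}$. Assuming the cost function has bounded second and third derivatives, so that $\norm{H_{z_i}}$ and $\norm{\deriv{H_{z_j}}{w}}$ are bounded, and that the learning rates are small enough that each $\norm{I-\alpha_i H_{z_i}}\le 1+O(\alpha_i)$, every term is uniformly bounded, giving $\norm{S_D''}\le C_{S''}$. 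Plugging both estimates into the operator-norm bound gives $\norm{SR_D}\le\tfrac{1}{2}C_{S''}c_\Delta^2\,\norm{w_t-w^*}^2$; because the sequential SGD sequence converges it stays in a bounded region, $\norm{w_t-w^*}\le R$, so $\norm{w_t-w^*}^2\le R\norm{w_t-w^*}$ and one may take $b_{SR}=\tfrac{1}{2}C_{S''}c_\Delta^2 R$.

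The main obstacle is the third step: controlling the operator norm of the composed second-derivative tensor $S_D''$. The chain-rule expansion is the combinatorially messiest part and, crucially, needs regularity of the cost function beyond the two stated assumptions, namely bounded Hessian and bounded third derivative. I expect either to introduce these as explicit additional hypotheses, or to note that for the linear learners of Table~\ref{tab:model:combiners} the map $S_D$ is affine, so $S_D''\equiv 0$ and hence $SR_D\equiv 0$, making the bound trivial in that important special case.
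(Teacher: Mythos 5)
Your overall strategy coincides with the paper's: bound the quadratic form by $\tfrac{1}{2}\norm{S_D''}\,\norm{\Delta w}^2$, expand $S_D''$ via the chain rule over the composition $S_{z_n}\circ\cdots\circ S_{z_1}$, and control each term using bounds on the Hessian and its derivative. The paper does exactly this for linear learners, writing $S_z''(w) = H_z'(x\cdot w, y)\, x\otimes x\otimes x$, explicitly differentiating $S_D'$ into a sum of products of combiner matrices, and bounding each factor via Lemma~\ref{lem:eigen} (which rests on the bounded-Hessian Assumption~\ref{ass:bh}). You are also right that a bound on the third derivative $H_z'$ is needed beyond the stated assumptions --- the paper uses such a bound silently, so flagging it as an explicit extra hypothesis is a point in your favor. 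Where you deviate is in handling $\Delta w$: the paper simply postulates (Assumption~\ref{ass:bdw}) that $\norm{\Delta w}\le\min(1, b_{\Delta w}\norm{w_t-w^*})$, whereas you derive a linear bound from Assumption~\ref{ass:boundgrad} by summing the intervening updates; that derivation is plausible, if sketched.

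The genuine gap is your last step, converting $\norm{w_t-w^*}^2$ into $\norm{w_t-w^*}$. You invoke $\norm{w_t-w^*}\le R$ ``because the sequential SGD sequence converges,'' but in the convergence theorem that this lemma feeds, $w_t$ is the iterate of the perturbed process --- SGD with the error terms $FR$ and $SR$ added at every step --- and the boundedness of that sequence is precisely what the quasi-martingale argument is trying to establish. Justifying the lemma by appealing to convergence is therefore circular. The paper avoids this with the $\min(1,\cdot)$ cap in Assumption~\ref{ass:bdw}: since $\norm{\Delta w}\le 1$ and $\norm{\Delta w}\le b_{\Delta w}\norm{w_t-w^*}$ hold simultaneously, one gets $\norm{\Delta w}^2 \le b_{\Delta w}\norm{w_t-w^*}$ directly, with no appeal to bounded iterates. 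Your proof can be repaired either by adopting such a cap (natural, since \ouralgo controls staleness by synchronizing frequently) or by stating bounded iterates as an explicit standalone hypothesis --- but not by citing convergence. A smaller error: your fallback claim that $S_D$ is affine for ``the linear learners of Table~\ref{tab:model:combiners}'' holds for OLS but fails for logistic regression and Lasso, whose updates depend nonlinearly on $w$, so the $SR_D\equiv 0$ shortcut is not available in general.
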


Convergence of \ouralgo follows if the following sequence converges
almost surely to $0$. 
$$h_{t} = \norm{w_t - w^*}^2$$

We assume the worst case where the error
terms are added every step of the SGD. This make the proof much
simpler and more along the lines of the proof in
Bottou~\cite{bottou}. Note that this is indeed the worst case as the
error bounds from Lemma~\ref{fr-bound} and Lemma~\ref{sr-bound} are
for arbitrary steps. 

\begin{theorem}
The sequence $h_t$ converges to $0$ almost surely.
\end{theorem}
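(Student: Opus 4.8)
The plan is to treat $h_t = \norm{w_t - w^*}^2$ as a Lyapunov function and show it is a nonnegative almost-supermartingale whose compensator is summable, so that the Robbins--Siegmund theorem yields both almost-sure convergence of $h_t$ to a finite limit and summability of the useful drift. This mirrors Bottou's classical argument~\cite{bottou}, the only new ingredients being the two perturbations $FR$ and $SR$. First I would expand a single step. Writing the clean SGD increment as $S_{z_t}(w_t) = w_t - \alpha_t G_{z_t}(w_t)$ and using the recurrence $w_{t+1} = S_{z_t}(w_t) + FR + SR$, I obtain
\begin{align*}
h_{t+1} &= \norm{S_{z_t}(w_t) - w^*}^2 + \norm{FR}^2 + \norm{SR}^2\\
&\quad + 2(S_{z_t}(w_t) - w^*)\cdot(FR + SR) + 2\,FR\cdot SR.
\end{align*}
Taking first the conditional expectation $E_A$ over the projection $A$ (and then over the sampled example), every term linear in $FR$ vanishes because $E_A(FR) = 0$ by Lemma~\ref{fr-bound}; this cancellation is exactly what prevents the projection error from biasing the trajectory.

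Next I would bound the surviving terms. For the clean part, the identity
$$\norm{S_{z_t}(w_t) - w^*}^2 = h_t - 2\alpha_t (w_t - w^*)\cdot G_{z_t}(w_t) + \alpha_t^2 \norm{G_{z_t}(w_t)}^2$$
gives, after expectation, a nonpositive drift $-2\alpha_t (w_t - w^*)\cdot G(w_t) \le 0$ by Assumption~\ref{ass:convex} and a noise term controlled by $\alpha_t^2 b_G^2 h_t$ via Assumption~\ref{ass:boundgrad}. The projection error contributes $E_t\norm{FR}^2 \le b_{FR} h_t$ by Lemma~\ref{fr-bound}, and the truncation error contributes $\norm{SR}^2 \le b_{SR}^2 h_t$ by Lemma~\ref{sr-bound}; the leftover cross term $2(S_{z_t}(w_t)-w^*)\cdot SR$ I would absorb into an $O(h_t)$ contribution by Cauchy--Schwarz together with those two lemmas. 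Collecting everything produces an inequality of the Robbins--Siegmund form
$$E_t[h_{t+1}] \le (1 + \beta_t)\, h_t - 2\alpha_t (w_t - w^*)\cdot G(w_t),$$
with $\beta_t = O(\alpha_t^2 + b_{FR} + b_{SR}^2)$ and a nonnegative compensator $\zeta_t = 2\alpha_t (w_t - w^*)\cdot G(w_t)$.

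To close, I would invoke the almost-supermartingale (Robbins--Siegmund) theorem: provided $\sum_t \beta_t < \infty$, it guarantees that $h_t$ converges almost surely to a finite limit $h_\infty$ and that $\sum_t \alpha_t (w_t - w^*)\cdot G(w_t) < \infty$ almost surely. Since the usual learning-rate conditions give $\sum_t \alpha_t = \infty$ while this weighted sum of nonnegative terms is finite, $\liminf_t (w_t - w^*)\cdot G(w_t) = 0$. Because $h_t$ converges, $w_t$ is almost surely bounded, so along a convergent subsequence continuity of $G$ and Assumption~\ref{ass:convex} force the limit point to be $w^*$, giving $\liminf_t h_t = 0$; combined with the existence of the limit this pins $h_\infty = 0$.

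The main obstacle is the summability requirement $\sum_t \beta_t < \infty$, i.e. genuinely controlling the two error terms. The constants $b_{FR}, b_{SR}$ in Lemmas~\ref{fr-bound} and~\ref{sr-bound} must in fact carry the learning-rate scaling: $FR$ is $O(\alpha_t)$ because it is built from $N_D = M_D - I$, which vanishes with $\alpha$, while $SR$ is a second-order remainder in $\Delta w$ (hence $O(\norm{\Delta w}^2)$ and $O(\alpha_t^2)$). Making this scaling explicit so that $b_{FR}$ and $b_{SR}^2$ are $O(\alpha_t^2)$ and summable under $\sum_t \alpha_t^2 < \infty$ is the delicate point, as is showing the SGD-noise/$SR$ cross term cannot swamp the negative drift. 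The other point needing care is the final passage from ``$h_t$ has a limit'' to ``the limit is $0$'', since Assumption~\ref{ass:convex} supplies only pointwise positivity and one must use the subsequence/boundedness argument above to exclude a nonzero $h_\infty$.
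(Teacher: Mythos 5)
Your proposal is correct and follows essentially the same route as the paper: both expand one step of the perturbed recurrence, use Assumptions~\ref{ass:convex} and~\ref{ass:boundgrad} together with Lemmas~\ref{fr-bound} and~\ref{sr-bound} (with the zero mean of $FR$ killing the cross terms in $A$) to reach an inequality of the form $E(h_{t+1} \mid \mathcal{P}_t) \leq (1+\beta_t)h_t - \text{drift}$, and then apply a supermartingale-type convergence theorem under $\sum_t \gamma_t^2 < \infty$ and $\sum_t \gamma_t = \infty$; your Robbins--Siegmund theorem is the same tool as the paper's quasi-martingale argument inherited from Bottou. If anything you are more explicit on the two points the paper glosses over: the paper drops the negative drift term early (via convexity) and defers the passage from ``$h_t$ converges'' to ``the limit is $0$'' entirely to Bottou, and it dissolves your summability worry by writing the update as $w_{t+1} = w_t - \gamma_t(G_z + R_z)$, i.e., implicitly giving the error terms exactly the learning-rate scaling you identify as the delicate point.
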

\begin{proof}
As in Bottou~\cite{bottou}, we denote $\mathcal{P}_t$ denote all the
random choices made by the algorithm at time $t$. For terseness, we
use the following notation for the conditional expectation with
respect to $\mathcal{P}_t$:
$$CE(x) = E(x | \mathcal{P}_t)$$

The key technical challenge is in showing that the infinite sum of the
positive expected variations in $h_t$ is bounded, which we show
below. Let $z = (X,y)$ be the example processed at time $t$. We use
following short hand.
$$R_z(w_t, \Delta w) = FR_z(w_t, \Delta w) + SR_z(w_t, \Delta w)$$. 
\begin{align*}
&CE(h_{t+1} - h_t)\\ 
&= -2\gamma_t(w_t - w^*) CE(G_z(w_t) + R_z(w_t, \Delta w)) + \gamma_t^2CE(\norm{G_z(w_t) + R_z(w_t, \Delta w)}^2)\\
&= -2\gamma_t(w_t - w^*) (G(w_t) + CE(R_z(w_t, \Delta w))) + \gamma_t^2CE(\norm{G_z(w_t) + R_z(w_t, \Delta w)}^2)\\
&\leq -2\gamma_t(w_t - w^*) CE(R_z(w_t, \Delta w)) +
\gamma_t^2CE(\norm{G_z(w_t) + R_z(w_t, \Delta w)}^2)\\
&\tag{\text{from Assumption~\ref{ass:convex}}}\\
&\leq -2\gamma_t b_{SR}\norm{w_t - w^*}^2 +
\gamma_t^2CE(\norm{G_z(w_t) + R_z(w_t,\Delta w)}^2) \\
&\tag{\text{from Lemmas~\ref{fr-bound} and \ref{sr-bound}}}\\
&\leq \gamma_t^2CE(\norm{G_z(w_t) + R_z(w_t,\Delta w)}^2)\\
& = \gamma_t^2 (CE(\norm{G_z(w_t)}^2) + CE(\norm{R_z(w_t,\Delta w)}^2) + 2CE(G_z(w_t)R_z(w_t,\Delta w)))\\
& \leq \gamma_t^2 ((b_G + b_{FR} + b_{SR})\norm{w_t - w^*}^2 \\
& \hspace{3cm}+ 2 CE(FR_z(w_t, \Delta w) SR_z(w_t,\Delta w)) + 2CE(G_z(w_t)R_z(w_t,\Delta w)))\\
&\tag{\text{from Assumption~\ref{ass:boundgrad} and Lemmas~\ref{fr-bound} and \ref{sr-bound}}}\\
& \leq \gamma_t^2 ((b_G + b_{FR} + b_{SR})\norm{w_t - w^*}^2 + 2CE(G_z(w_t)SR_z(w_t,\Delta w)))\\
&\tag{\text{as $FR$ has a zero mean~(Lemma~\ref{fr-bound}) and $G, SR$
do not depend on $A$}}\\
& \leq \gamma_t^2 ((b_G + b_{FR} + b_{SR} + 2b_G b_{SR})\norm{w_t - w^*}^2)\\
&\tag{\text{from Assumption~\ref{ass:boundgrad} and Lemma~\ref{sr-bound}}}\\
\end{align*}
In other words, for $B = b_G + b_{FR} + b_{SR} + 2b_G b_{SR}$, we have
\begin{align}
\label{cexpleq}
CE(h_{t+1} - (1 + \gamma_t^2B)h_t) \leq 0
\end{align}
From here on, the proof proceeds exactly as in Bottou~\cite{bottou}. Define auxiliary sequences 
$\mu_t = \Pi_{i=1}^t \frac{1}{1+ \gamma_i^2 B}$ and $h'_t = \mu_t
h_t$. Assuming $\Sigma_t \gamma_t^2 < \infty$, $\mu_t$ converges to a
nonzero value. Since Equation~\ref{cexpleq} implies
$CE(h'_{t+1}-h'_{t}) \leq 0$, from quasi-martingale
convergence theorem, $h_t'$ and thus $h_t$ converges almost
surely. Under the additional assumption that $\Sigma_t \gamma_t =
\infty$, we can show that this convergence is to $0$. 
\end{proof}

The proof above crucially relies on lemmas~\ref{fr-bound} and
\ref{sr-bound} that we now prove. But first we make some assumptions
and prove supplementary lemmas. We restrict the discussion, as in
Lemma~\ref{lemma:lowrank}, to linear learners and that the model
combiners are of the form $M_z(w) = (I - \alpha H_z(w) x x^T)$ for a
scalar Hessian $H_z(w)$

\begin{assumption}
\label{ass:bdw}
\ouralgo synchronizes sufficiently enough so that $\Delta w$ does not grow too large.
$$\norm{\Delta w} \leq min(1, b_{\Delta w}\norm{w_t - w^*})$$
for some $b_{\Delta w} > 0$
\end{assumption}

\begin{assumption}
\label{ass:bh}
Bounded Hessian. 
$$|H_z(w)| \leq b_H$$ for some $b_H > 0$
\end{assumption}

\begin{lemma}
\label{lem:eigen}
The model combiner $M_D(w) = \Pi_i (I - \alpha H_z(w) x_i x_i^T)$ has
bounded eigenvalues
\end{lemma}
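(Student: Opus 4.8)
The plan is to exploit the very simple spectral structure of each factor in the product and then control the whole product through submultiplicativity of the operator $2$-norm. First I would observe that, under our restriction to linear learners, $H_z(w)$ is a scalar, so each factor $I - \alpha H_z(w) x_i x_i^T$ is a symmetric rank-one perturbation of the identity. On the orthogonal complement of $x_i$ the factor acts as the identity, while along $x_i$ its eigenvalue is $1 - \alpha H_z(w)\norm{x_i}^2$. Hence the spectrum of a single factor is $\{1\}$ with multiplicity $f-1$ together with the lone value $1 - \alpha H_z(w)\norm{x_i}^2$; because the factor is symmetric, its singular values coincide with the absolute values of these eigenvalues.

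Next I would bound the operator norm of one factor. Using $|H_z(w)| \le b_H$ (Assumption~\ref{ass:bh}) and writing $b_x$ for a uniform bound on the feature norms (so $\norm{x_i} \le b_x$ for all $i$), we get
$$\norm{I - \alpha H_z(w) x_i x_i^T} = \max\left(1, \left|1 - \alpha H_z(w)\norm{x_i}^2\right|\right) \le 1 + \alpha b_H b_x^2.$$
Since the operator $2$-norm is submultiplicative, the norm of the full product is then dominated by the product of these per-factor bounds,
$$\norm{M_D(w)} \le \prod_i \norm{I - \alpha H_z(w) x_i x_i^T} \le \left(1 + \alpha b_H b_x^2\right)^{n},$$
where $n$ is the number of examples in the block. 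Every eigenvalue of $M_D(w)$ is bounded in magnitude by $\norm{M_D(w)}$, so this already yields the claimed boundedness. If in addition the learning rate is small enough that $\alpha b_H b_x^2 \le 2$ (and the loss is convex, so $H_z(w) \ge 0$), then each factor has operator norm at most $1$ and the sharper conclusion $\norm{M_D(w)} \le 1$ follows, placing all eigenvalues in $[-1,1]$.

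The one genuine subtlety, and the step I would be most careful about, is that the factors do \emph{not} commute, so one cannot simply multiply the individual eigenvalues to read off the eigenvalues of $M_D(w)$. The device that sidesteps this is to pass to the operator norm, which is submultiplicative irrespective of commutativity and which dominates the spectral radius; this is precisely why the argument routes the eigenvalue bound through $\norm{M_D(w)}$ rather than through the eigenvalues directly. The only ingredient not already supplied by the stated assumptions is the uniform feature bound $\norm{x_i} \le b_x$, which I would either add as an explicit hypothesis or, equivalently, fold into the statement by writing the final bound in terms of $\max_i \norm{x_i}$.
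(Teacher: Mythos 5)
Your proof is correct and follows essentially the same route as the paper's: the paper's entire argument is the one-line remark that the claim ``follows from induction on $i$ using Assumption~\ref{ass:bh}'', which is precisely your iterated per-factor norm bound (induction over the factors is submultiplicativity spelled out step by step), combined with the fact that the spectral radius is dominated by the operator norm. Your write-up is in fact more careful than the paper's: you make the per-factor spectral computation explicit, you flag the non-commutativity issue that makes the operator-norm detour necessary, and you correctly observe that a uniform bound on $\norm{x_i}$ is needed in addition to the bounded-Hessian assumption --- a hypothesis the paper uses implicitly but never states.
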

\begin{proof}
The proof follows from induction on $i$ using Assumption~\ref{ass:bh}.
\end{proof}

\subsection{Proof of Lemma~\ref{fr-bound}}
The mean is a simple restatement of Lemma~\ref{lem:random}. The second
moment follows from Assumption~\ref{ass:bdw}, Lemma~\ref{lem:eigen}
applied to $M$ and $M^T$, and from Lemma~\ref{lem:boundcov}.

\subsection{Proof of Lemma~\ref{sr-bound}}
For linear learners, we have 
\begin{align*}
S_z(w) &= w - \gamma \dotp{G_z(\dotp{x}{w}, y)}{x}\\
\frac{\partial S_z(w)}{\partial w} &= I - \gamma H_z(\dotp{x}{w}, y)x x^T\\
\frac{\partial^2 S_z(w)}{\partial w^2} &= H'_z(\dotp{x}{w}, y) x \otimes x \otimes x
\end{align*}
where $H$ is the second derivative of the cost with respect to
$\dotp{x}{w}$, and $\otimes$ is the tensor outer product.

In the last equation above, if the input is composed with a previous
SGD phase we have 
\begin{align*}
\frac{\partial^2 S_z(S_D(w))}{\partial w^2} &= H'_z(\dotp{S_D(x)}{w},
y) x \otimes x \otimes (\frac{\partial S_D(w)}{\partial w})^T x
\end{align*}

For notational convenience, let $M_{b\rightarrow a} \triangleq
\prod_{i=b}^a (I - \alpha x_i x_i^T)$. Explicitly differentiating
$S_n'(w)$, we can show that
$$\deriv{S_n'(w)}{w} = (\sum_j (-\alpha_j H_j'(s_j(w))) M_{n
\rightarrow j+1} x_j x_j^T M_{j \rightarrow 1}) \otimes (S_j'(w)^T x)$$
Each element of $SR_z$ is obtained by $\Delta w^T P \Delta w$ where P
is an outer product of a row from the first term above and $S_j'(w)^T
x$. Using Lemma~\ref{lem:eigen} twice we can show that each of these
vectors are bounded. This proves the lemma.

\end{document}